\definecolor{mygray}{gray}{0.9}
\definecolor{mygray1}{gray}{0.95}
\newtheorem{thm}{Theorem}
\newtheorem{define}{Definition}
\newtheorem{lemma}[thm]{Lemma}
\newtheorem{proposition}{Proposition}
\newcommand{\ie}{\textit{i}.\textit{e}.}
\title{Prototypical Partial Optimal Transport for Universal Domain Adaptation}
\author {
	Yucheng Yang\equalcontrib,
	Xiang Gu\equalcontrib, 
	Jian Sun \thanks{Corresponding Author.}
}
\begin{document}
	
	\maketitle
	
	\begin{abstract}
	Universal domain adaptation (UniDA) aims to transfer knowledge from a labeled source domain to an unlabeled target domain without requiring the same label sets of both domains. The existence of domain and category shift makes the task challenging and requires us to distinguish “known” samples (\ie, samples whose labels exist in both domains) and “unknown” samples (\ie, samples whose labels exist in only one domain) in both domains before reducing the domain gap. In this paper, we consider the problem from the point of view of distribution matching which we only need to align two distributions partially. A novel approach, dubbed mini-batch Prototypical Partial Optimal Transport (m-PPOT), is proposed to conduct partial distribution alignment for UniDA. In training phase, besides minimizing m-PPOT, we also leverage the transport plan of m-PPOT to reweight source prototypes and target samples, and design reweighted entropy loss and reweighted cross-entropy loss to distinguish “known” and “unknown” samples. Experiments on four benchmarks show that our method outperforms the previous state-of-the-art UniDA methods. 
	\end{abstract}
	
	\section{Introduction}
	Deep Learning has achieved significant progress in image recognition \cite{krizhevsky2012imagenet, simonyan2014very}. However, deep learning based methods heavily rely on in-domain labeled data for training, to be generalized to target domain data. Considering that collecting annotated data for every possible domain is labour-intensive and time-consuming, a feasible solution is unsupervised domain adaptation (UDA) \cite{ben2010theory, ganin2015, long2018conditional}, which transfers the knowledge from labeled source domain to unlabeled target domain by alleviating distribution discrepancy between them. The most common setting in UDA is closed-set DA which assumes the source class set $C_{s}$ is identical to the target class set $C_{t}$. 
	This may be impractical in real-world applications, because it is difficult to ensure that the target dataset always has the same classes as the source dataset.

	To tackle this problem, some works consider more general domain adaptation tasks. For example, partial domain adaptation (PDA) \cite{cao2018part} assumes that target class set is a subset of source class set, \ie, $C_{t} \subseteq C_{s}$. Open-set domain adaptation (OSDA) \cite{saito2018open} exploits the situation where source class set is a subset of target class set $C_{s}\subseteq C_{t}$. Universal domain adaptation (UniDA) \cite{you2019universal} is a more general setting that both source and target domains possibly have common and private classes. The setting of UniDA includes PDA, OSDA, and a mixture of PDA and OSDA, \ie, open-partial DA (OPDA) \cite{panareda2017open}, in which both source and target domains have private classes. 
	This paper focuses on the general UniDA setting. The goal of UniDA is to classify target domain common class samples and detect target-private class samples, meanwhile reducing the negative transfer possibly caused by source private classes. To reduce the domain gap in UniDA, we may use distribution alignment techniques as in UDA methods~\cite{courty2017joint,ganin2015}, to align the distributions of two domains.  However, matching all data of two domains may lead to the mismatch of the common class data of one domain to the private class data in the other domain, and cause negative transfer.
	
	In this work, we propose a novel Prototypical Partial Optimal Transport (PPOT) approach to tackle UniDA. Specifically, we 
	model distribution alignment in UniDA as a partial optimal transport (POT) problem, to align a fraction of data (mainly from common classes), between two domains using POT. We design a prototype-based POT, in which the source data are represented as prototypes in POT formulation, which is further formulated as a mini-batch-based version, dubbed m-PPOT. We prove that POT can be bounded by m-PPOT and the distances between source samples and their corresponding prototypes, inspiring us to design a deep learning model for UniDA by using m-PPOT as one training loss. Meanwhile, the transport plan of m-PPOT can be regarded as a matching matrix, enabling us to utilize the row sum and column sum of the transport plan to reweight the source prototypes and target samples for distinguishing “known” and “unknown” samples. Based on the transport plan of m-PPOT, we further design reweighted cross-entropy loss on source labeled data and reweighted entropy loss on target data to learn a transferable recognition model.  

    In experiments, we evaluate our method on four UniDA benchmarks. Experimental results show that our method performs favorably compared with the state-of-the-art methods for UniDA. The ablation study validates the effectiveness of individual components proposed in our method.
	
\section{Related Work}
	\subsection{Domain Adaptation}
	Unsupervised domain adaptation aims to reduce the gap between source and target domains. Previous works \cite{ganin2015, long2015learning, long2018conditional} mainly focus on distribution alignment to mitigate domain gaps. The theoretical analysis in \cite{ben2010theory} shows that minimizing the discrepancy between source and target distributions may reduce the target prediction error. Previous works often minimize distribution discrepancy between two domains by adversarial learning \cite{ganin2015, long2018conditional, zhang2019bridging} and moment matching \cite{long2015learning, sun2016deep, pan2019transferrable}. Partial DA \cite{cao2018partial} tackles the scenario that only the source domain contains private classes. The methods in \cite{cao2018part, zhang2018importance, gu2021adversarial} are mainly based on reweighting source data for reducing negative transfer caused by source private class samples. Open-set DA \cite{saito2018open} assumes that the label set of the source domain is a subset of that of the target domain. ~\cite{saito2018open, liu2019separate, bucci2020effectiveness} propose diverse methods to classify “known” samples meanwhile rejecting “unknown” samples. 
	
    \subsection{Universal Domain Adaptation}
	Universal DA does not have any prior knowledge on the label space of two domains, which means that both source and target domains may or may not have private classes. UAN \cite{you2019universal} computes the transferability of samples by entropy and domain similarity to separate “known” and “unknown” samples. CMU \cite{fu2020learning} improves UAN to measure transferability by a mixture of entropy, confidence, and consistency from ensemble model. DANCE \cite{saito2020universal} designs an entropy-based method by increasing the confidence of common class samples while decreasing it for private class samples to better distinguish known and unknown samples. DCC \cite{li2021domain} tries to exploit the domain consensus knowledge to discover matched clusters for separating common classes in cluster-level. OVANet \cite{saito2021ovanet} trains a “one-vs-all” discriminator for each class to recognize private class samples. GATE \cite{chen2022geometric} explores the intrinsic geometrical relationship between the two domains and designs a universal incremental classifier to separate “unknown” samples.
	Different from the above methods, we model the UniDA as a partial distribution alignment problem and propose a novel m-PPOT model to solve it. 
	
	\subsection{Optimal Transport} 
	Optimal transport (OT) \cite{villani2009optimal, peyre2019computational} is a mathematical tool for transporting/matching distributions. OT has been applied to diverse tasks such as generative adversarial training \cite{arjovsky2017wasserstein}, clustering \cite{ho2017multilevel}, domain adaptation \cite{courty2017joint}, object detection \cite{ge2021ota}, etc. The partial OT \cite{caffarelli2010free, figalli2010optimal} is a special OT problem that only transports a portion of the mass. To reduce computational cost of OT, the Sinkhorn OT \cite{cuturi2013sinkhorn} can be efficiently solved by the Sinkhorn algorithm, and is further extended to partial OT in \cite{benamou2015iterative}.  
	In \cite{flamary2016optimal, courty2017joint, damodaran2018deepjdot}, OT was applied to domain adaptation to align distributions of source and target domains in input space or feature space. They use OT in mini-batch to reduce computational overhead, however, suffering from sampling bias that the mini-batch data partially reflect the original data distribution.~\cite{fatras2021unbalanced, nguyen2022improving} replace mini-batch OT with more robust OT models, such as unbalanced mini-batch OT and partial mini-batch OT, and achieve better performance.~\cite{xu2021joint} designs joint partial optimal transport which only transports a fraction of the mass for avoiding negative transfer, and extends the task into open-set DA. 
	
	In this work, we consider the UniDA task. We propose a novel mini-batch based prototypical POT model, which partially aligns the source prototypes and target features to solve the problem of UniDA. Experiments show that our method achieves state-of-the-art results for UniDA.
	
	\begin{figure*}[t]
		\centering
		\includegraphics[width=0.935\textwidth]{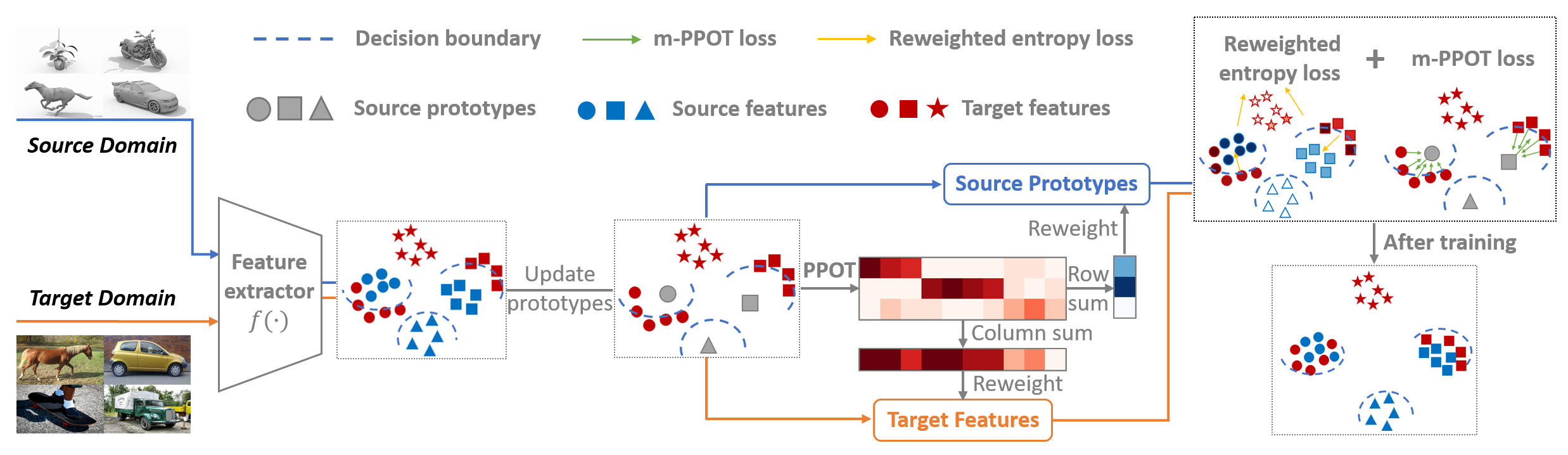}
		\caption{\textbf{Illustration of our model.} Source and target data share the same feature extractor that embeds data in feature space. PPOT is to match target features and source prototypes which are updated by the source features, and the row/column sum of transport plan is applied for reweighting. We design reweighted entropy loss to align common class features of two domains, while pushing away the unknown features.}
		\label{fig: model}
	\end{figure*}
	
	\section{Preliminaries on Optimal Transport}\label{sec: preliminaries}
	We consider two sets of data points, $\{x_{i}^{s} \}_{i=1}^{m}$ and $\{x_{j}^{t} \}_{j=1}^{n}$, of which the empirical distributions are denoted as $\bm{\mu}=\sum_{i=1}^{m}\mu_{i}\delta_{x_{i}^{s}}$ and $\bm{\nu}=\sum_{j=1}^{n}\nu_{j}\delta_{x_{j}^{t}}$ respectively, where $\sum_{i=1}^{m}\mu_{i} =1$, $\sum_{j=1}^{n}\nu_{j} = 1$ and $\delta_{x}$ is the Dirac function at position $x$. With a slight abuse of notations, we denote  $\bm{\mu}=(\mu_1,\mu_2,\cdots,\mu_m)^{\top}$, $\bm{\nu}=(\nu_1,\nu_2,\cdots,\nu_n)^{\top}$ and define a cost matrix as $C \in \mathbb{R}^{m\times n}, C_{ij} = c(x_{i}^{s}, x_{j}^{t})$.
	
	\textbf{Kantorovich problem.}
   The Kantorovich problem \cite{kantorovitch1958} aims to derive a transport plan from $\bm{\mu}$ to $\bm{\nu}$, modeled as the following linear programming problem:
	\begin{equation}\label{eq:kant}
		\begin{aligned}
			&\text{OT}(\bm{\mu}, \bm{\nu})\triangleq\min_{\pi \in \Pi(\bm{\mu},\bm{\nu})}\langle \pi , C \rangle_F \\ 
			&s.t. \hspace{0.2cm} \Pi(\bm{\mu},\bm{\nu}) = \{\pi \in  \mathbb{R}^{m \times n}_{+}\vert \pi \mathbb{1}_{n} = \bm{\mu}, \pi^{\top} \mathbb{1}_{m} = \bm{\nu}\},
		\end{aligned}
	\end{equation}
	where $\langle \cdot , \cdot \rangle_F$ denotes the Frobenius inner product.
	
	\textbf{Mini-batch OT} is designed to reduce computational cost and make OT more suitable for deep learning. We denote the collection of empirical distributions of $b$ random samples in $\{x_{i}^{s} \}_{i=1}^{m}$ (resp. $\{x_{j}^{t} \}_{j=1}^{n}$) as $\mathcal{P}_{b}(\bm{\mu})$ (resp. $\mathcal{P}_{b}(\bm{\nu})$), where $b$ is the batch size, and $k$ is the number of mini-batches.
	The mini-batch OT is defined as
	\begin{equation}\label{eq:mot}
	\text{m-OT}_{k}(\bm{\mu}, \bm{\nu}) = \frac{1}{k}\sum_{i=1}^{k}\text{OT}(A_{i}, B_{i}),
    \end{equation}
    where $A_{i} \in \mathcal{P}_{b}(\bm{\mu})$, $B_{i} \in \mathcal{P}_{b}(\bm{\nu})$ for any $i = 1, 2, ..., k$.
	
	\textbf{Partial OT} aims to transport only $\alpha$ mass $(0 \leqslant \alpha \leqslant \min(\Vert\bm{\mu}\Vert_{1}, \Vert\bm{\nu}\Vert_{1}))$ between $\bm{\mu}$ and $\bm{\nu}$ with the lowest cost. The partial OT is defined as
	\begin{equation}\label{eq:pot}
		\begin{aligned}
			&\text{POT}^{\alpha}(\bm{\mu}, \bm{\nu})\triangleq\min_{\pi \in \Pi^{\alpha}(\bm{\mu},\bm{\nu})}\langle \pi , C \rangle_F, \\
		\end{aligned}
	\end{equation}
	where $	\Pi^{\alpha}(\bm{\mu},\bm{\nu}) = \{\pi \in \mathbb{R}^{m\times n}_+ \vert \pi \mathbb{1}_{n} \leqslant \bm{\mu}, \pi^{\top} \mathbb{1}_{m} \leqslant \bm{\nu}, \mathbb{1}_{m}^{\top}\pi\mathbb{1}_{n} = \alpha\}$.

	\section{Method}
	In this section, we model UniDA as a partial distribution alignment problem. To partially align source and target distributions, the mini-batch prototypical partial optimal transport (m-PPOT) is proposed. The m-PPOT focuses on the discrete partial OT problem between source prototypes and target samples for mini-batch. Based on m-PPOT, we design a novel model for UniDA.
    We also use contrastive pre-training to have a better initialization of network parameters. 
	
	In UniDA, we are given labeled source data $\bm{D_{s}} =  \{x_{i}^{s}, y_{i}\}_{i=1}^{m}$ and unlabeled target data $\bm{D_{t}}=\{x_{j}^{t}\}_{j=1}^{n}$ . UniDA aims to label the target sample with a label from source class set $C_{s}$ or discriminate it as an “unknown” sample. We denote the number of source domain classes as $L = |C_{s}|$. Our deep recognition model consists of two modules, including a feature extractor $f$ mapping input $x$ into feature $z$, and an $L$-way classification head $h$. The source and target empirical distributions in feature space are denoted as $\bm{\bar{p}} = \sum_{i=1}^{m}p_{i}\delta_{f(x_{i}^{s})}$ and $\bm{\bar{q}} = \sum_{j=1}^{n}q_{j}\delta_{f(x_{j}^{t})}$ respectively, where $\sum_{i=1}^{m}p_{i} =1$, $\sum_{j=1}^{n}q_{j} = 1$. With a slight abuse of notations, we denote the vector of data mass as $\bm{\bar{p}} = (p_{1}, p_{2}, ... , p_{m})^{\top}$ and $\bm{\bar{q}} = (q_{1}, q_{2}, ... , q_{n})^{\top}$ and we set $p_i = \frac{1}{m}, ~q_j = \frac{1}{n}$ for any $i, j$ in this paper. Furthermore, the element of cost matrix $C$ is defined as $C_{ij} = d(f(x_{i}^{s}), f(x_{j}^{t}))$, where $d$ is the $L_2$-distance.
	
	\subsection{Modeling UniDA as Partial OT}
	\cite{ben2006analysis, ben2010theory} presented theoretical analysis on domain adaptation, emphasizing the importance of minimizing distribution discrepancy. 
 However, it can not be simply extended to UniDA because the source/target data may belong to source/target private classes in UniDA setting. Directly aligning source distribution $\bm{\bar{p}}$ and target distribution $\bm{\bar{q}}$ will lead to data mismatch due to the existence of “unknown” samples in both domains. For UniDA task, we first decompose $\bm{\bar{p}}$, $\bm{\bar{q}}$ as
	\begin{equation*}
		\bm{\bar{p}} = (1 - \beta) \bm{p}_{p} + \beta  \bm{p}_{c}, \quad
		\bm{\bar{q}} = (1 - \alpha) \bm{q}_{p} + \alpha  \bm{q}_{c},
	\end{equation*}
	where $\bm{p}_{p}$ (resp. $\bm{q}_{p}$) denotes distribution of source (resp. target) private class data in feature space, $\bm{p}_{c}$ and $\bm{q}_{c}$ are denoted as source and target common class data distributions, $\alpha$ and $\beta$ are the ratio of common class samples in the source and target domain respectively. Our goal is to minimize the discrepancy between $\bm{p}_{c}$ and $\bm{q}_{c}$, formulated as an OT problem:
	\begin{equation}\label{vanilla}
		\min_{f, \pi} \langle \pi, \bar{C} \rangle_{F} = \min_{f} \text{OT}(\bm{p}_{c}, \bm{q}_{c}),
    \end{equation}
    where $\bar{C} \in \mathbb{R}^{|\bm{p}_{c}| \times |\bm{q}_{c}|}$ is a submatrix of $C$, corresponding to the common class samples.
	
	Obviously, we can not directly get these two distributions.
    Therefore, we consider to find an approximation of Eqn.~(\ref{vanilla}). Following the assumption in \cite{you2019universal} that $\bm{q}_{c}$ is closer to $\bm{p}_{c}$ than $\bm{q}_{p}$, meaning that the cost of transport between two domains' common class samples is generally less than the cost between two private class samples of these two domains or the private and common class samples of them. Note that partial OT only transports a fraction of the mass  having lowest cost to transport. With the above assumption, the partial transport between two domains will prefer to transfer the common class samples of them. Therefore, we approximately solve Eqn. (\ref{vanilla}) by optimizing 
	\begin{equation}\label{v2p}
		\min_{f} \text{POT}^{\alpha}(\frac{\alpha}{\beta}\bm{\bar{p}}, \bm{\bar{q}})
	\end{equation}
where coefficient $(\alpha/\beta)$ is to ensure that the mass of common class samples in $\bm{\bar{p}}$ and $\bm{\bar{q}}$ equals. The superscript $\alpha$ denotes the total mass to transport. For convenience of presentation, $(\alpha/\beta)\cdot\bm{\bar{p}}$ and $\bm{\bar{q}}$ are denoted as $\bm{p}$ and $\bm{q}$ respectively.
	
	\subsection{Prototypical Partial Optimal Transport}
	We have turned the distribution alignment between $\bm{p}_{c}$ and $\bm{q}_{c}$ into a partial OT problem in Eqn.~(\ref{v2p}). The remaining challenge is to embed partial OT into a deep learning framework. In this paper, we design a mini-batch based prototypical partial optimal transport problem for UniDA. We first define the Prototypical Partial Optimal Transport (PPOT).
	
	\begin{define}
		(Prototypical Partial Optimal Transport) Let  $\{c_{i}\}_{i=1}^{L}$ be the set of source domain prototypes, defined as 
		\begin{equation*}
			c_{i} = \sum_{j: y_{j}=i}\dfrac{f(x_{j}^{s})}{\sum_{l=1}^{m}\bm{1}(y_{l}=i)}.
		\end{equation*}
		The element of cost matrix $C_{ij}$ is defined as $d(c_{i}, f(x_{j}^{t}))$ and the PPOT transportation cost between $\bm{p}$ and $\bm{q}$ is defined as
		\begin{equation}
			\textup{PPOT}^{\alpha}(\bm{p}, \bm{q}) \triangleq \textup{POT}^{\alpha}(\bm{c}, \bm{q}) = \min_{\pi \in \Pi^{\alpha}(\bm{c},\bm{q})}\langle \pi , C \rangle_F,
		\end{equation}
		where $\bm{c} = \sum_{i=1}^{L}r_{i}\delta_{c_{i}}$ is the empirical distribution of source domain prototypes, and $r_{i} = \sum_{j: y_{j}=i}p_{j}$.
	\end{define}
		PPOT is suitable for the DA task because, first, it fits the mini-batch based deep learning implementation in which all of the prototypes, instead of batch of source samples, are regarded as source measures in POT. This change could reduce the mismatch caused by the lack of full coverage of source samples in a batch, and second, it requires less computational resources than original POT. 
  
  \subsubsection{Mini-batch based PPOT.} We further extend the PPOT to the mini-batch version m-PPOT, here we assume batch size $b$ satisfy $b\mid n$ and set $k = n / b$. Let $\mathcal{B}_{i}$ be the $i$-th index set of $b$ random target samples and their corresponding empirical distribution in feature space is denoted as $\bm{q}_{\mathcal{B}_{i}}$. We define $\mathcal{B} \triangleq \{\mathcal{B}_{i}\}_{i=1}^{k}$ as a partition if they satisfy:
	\begin{itemize}
		\item $\mathcal{B}_{i} \bigcap \mathcal{B}_{j} = \emptyset : \forall \; 0 \leqslant i < j \leqslant k$
		\item $\bigcup\limits_{i=1}^{k}\mathcal{B}_{i} = \{1,2,...,n\}$	
	\end{itemize}
	and the m-PPOT is defined as
	\begin{equation}\label{eq:mppot}
		\text{m-PPOT}^{\alpha}_{\mathcal{B}}(\bm{p}, \bm{q}) \triangleq \frac{1}{k}\sum_{i=1}^{k}\text{POT}^{\alpha}(\bm{c}, \bm{q}_{\mathcal{B}_{i}}),~~ \mathcal{B} \in \Gamma
	\end{equation}
	where $\Gamma$ is the set of all partitions of $\{1,2,...,n\}$, \ie, the index set of target data. Note that these assumptions are easily satisfied by the dataloader module in pytorch. Furthermore, we denote the optimal transportation in $i$-th batch as $\pi^{\alpha}_{i}$. To show that m-PPOT is closely related to PPOT, we give the following proposition \ref{prop: transportation}.
	\begin{proposition}\label{prop: transportation}
		We extend $\pi_{i}^{\alpha}$ to a $L \times n$ matrix $\Pi_{i}^{\alpha}$  that pad zero entries to the column whose index does not belong to $\mathcal{B}_{i}$, then we have
		\begin{equation*}			
			\dfrac{1}{k}\sum_{i=1}^{k}\Pi^{\alpha}_{i} \in \Pi^{\alpha}(\bm{c},\bm{q})			
		\end{equation*}
		and
		\begin{equation}			
			\textup{PPOT}^{\alpha}(\bm{p}, \bm{q}) \leqslant \textup{m-PPOT}^{\alpha}_{\mathcal{B}}(\bm{p}. \bm{q}).				
		\end{equation}
	\end{proposition}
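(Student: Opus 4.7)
The statement bundles two claims: a feasibility claim about the averaged padded transport plan, and the resulting inequality. I would prove them in that order, because once feasibility is established the inequality is essentially free.

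\emph{Feasibility.} Set $T = \tfrac{1}{k}\sum_{i=1}^k \Pi_i^\alpha$ and verify the three conditions defining $\Pi^\alpha(\bm{c},\bm{q})$. Nonnegativity is inherited from each $\pi_i^\alpha \ge 0$ and preserved by zero-padding and convex combination. For the row-sum inequality, each $\pi_i^\alpha \in \Pi^\alpha(\bm{c},\bm{q}_{\mathcal{B}_i})$ gives $\pi_i^\alpha \mathbb{1}_b \le \bm{c}$; zero-padding does not alter row sums, so $\Pi_i^\alpha \mathbb{1}_n \le \bm{c}$, and averaging $k$ such inequalities yields $T \mathbb{1}_n \le \bm{c}$. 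For the column-sum inequality I would crucially use that $\{\mathcal{B}_i\}$ is a partition: every target index $j$ lies in a unique $\mathcal{B}_{i(j)}$, so the $j$-th entry of $\sum_i (\Pi_i^\alpha)^\top \mathbb{1}_L$ has only one nonzero summand, namely the corresponding column sum of $\pi_{i(j)}^\alpha$, which is bounded by $q_j$ by feasibility; dividing by $k$ then gives $T^\top \mathbb{1}_L \le \tfrac{1}{k}\bm{q} \le \bm{q}$. Finally, $\mathbb{1}_L^\top \Pi_i^\alpha \mathbb{1}_n = \mathbb{1}_L^\top \pi_i^\alpha \mathbb{1}_b = \alpha$ for every $i$, so $\mathbb{1}_L^\top T \mathbb{1}_n = \alpha$, which settles the mass constraint.

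\emph{Inequality.} Feasibility of $T$ for the PPOT program immediately gives $\text{PPOT}^\alpha(\bm{p},\bm{q}) \le \langle T, C \rangle_F$, so it suffices to identify $\langle T, C \rangle_F$ with the m-PPOT cost. By linearity, $\langle T, C \rangle_F = \tfrac{1}{k}\sum_{i=1}^k \langle \Pi_i^\alpha, C \rangle_F$. Since $\Pi_i^\alpha$ vanishes on columns outside $\mathcal{B}_i$, only the submatrix $C_{[:,\mathcal{B}_i]}$ contributes, and $\langle \Pi_i^\alpha, C \rangle_F = \langle \pi_i^\alpha, C_{[:,\mathcal{B}_i]} \rangle_F = \text{POT}^\alpha(\bm{c}, \bm{q}_{\mathcal{B}_i})$ by the definition of $\pi_i^\alpha$ as the optimizer in batch $i$. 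Summing and averaging recovers $\text{m-PPOT}^\alpha_{\mathcal{B}}(\bm{p},\bm{q})$, completing the proof.

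\emph{Where the argument could trip.} The only step that is not pure bookkeeping is the column-sum bound: it relies on disjointness of the $\mathcal{B}_i$ to guarantee that each column of $T$ receives mass from exactly one padded plan, so that the per-column feasibility constraints add correctly. Without the partition hypothesis one would get only $T^\top \mathbb{1}_L \le \tfrac{1}{k}\sum_{i: j \in \mathcal{B}_i} q_j$, which can easily exceed $q_j$. Everything else is a routine aggregation of linear constraints and inner products.
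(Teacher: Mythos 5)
Your overall route is the same as the paper's: verify the three feasibility conditions for $T=\tfrac{1}{k}\sum_i \Pi_i^\alpha$, then bound $\textup{PPOT}^\alpha(\bm{p},\bm{q})$ by the cost of $T$ and identify that cost with the m-PPOT average. The row-sum bound, the mass constraint, and the cost identification are all handled exactly as in the paper.

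However, your column-sum step contains a normalization error. The mini-batch measure $\bm{q}_{\mathcal{B}_i}$ is the empirical distribution of $b$ samples, so it assigns mass $\tfrac{1}{b}=\tfrac{k}{n}=k\,q_j$ to each sample in the batch, not $q_j$. Hence batch feasibility gives, for the unique batch $i(j)$ containing index $j$, a column-sum bound of $k\,q_j$, and after the $\tfrac{1}{k}$ averaging you get exactly $T^\top\mathbb{1}_L\leqslant \bm{q}$ with no slack --- this is precisely the paper's computation with $\bar{\bm{q}}_{\mathcal{B}_i}=\bm{m}^i\odot\bm{q}$, $m^i_j=k$ for $j\in\mathcal{B}_i$. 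Your intermediate claim $T^\top\mathbb{1}_L\leqslant \tfrac{1}{k}\bm{q}$ is false in general: summing it over $j$ would force $\mathbb{1}_L^\top T\mathbb{1}_n\leqslant \tfrac{1}{k}$, contradicting the mass constraint $\mathbb{1}_L^\top T\mathbb{1}_n=\alpha$ that you correctly establish whenever $\alpha>\tfrac{1}{k}$ (the typical case). The needed conclusion $T^\top\mathbb{1}_L\leqslant\bm{q}$ still follows once the batch marginal is normalized correctly, so the gap is fixable, but as written the per-column bound you invoke is not what batch feasibility provides. Your observation that the partition property is the crucial ingredient (each column receives mass from exactly one padded plan) is correct and matches the role of $\tfrac{1}{k}\sum_i\bm{m}^i=\mathbb{1}_n$ in the paper.
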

	 Proposition \ref{prop: transportation} implies that $\textup{m-PPOT}^{\alpha}_{\mathcal{B}}(\bm{p}, \bm{q})$ is an upper bound for $\text{PPOT}^{\alpha}(\bm{p}, \bm{q})$.
	 The following theorem shows that POT is bounded by the sum of m-PPOT and the distances of source samples to their corresponding prototypes.

	\begin{thm}\label{thm: thm2}
		Considering two distributions $\bm{p}$ and $\bm{q}$, the distance between $f(x_{i}^{s})$ and corresponding prototype $c_{y_{i}}$ is denoted as $d_{i} \triangleq d(f(x_{i}^{s}), c_{y_{i}})$. The row sum of the optimal transport plan of $\text{PPOT}^{\alpha}(\bm{p}, \bm{q})$ is denoted as $\bm{w} = (w_1, w_2, ... , w_L )^{\top}$, $r_{i} = \sum_{j: y_{j}=i}p_{j}$. Then we have 
		\begin{equation}
			\textup{POT}^{\alpha}(\bm{p}, \bm{q}) \leqslant \sum_{i=1}^{m} \dfrac{w_{y_i}}{r_{y_i}} p_i d_{i} + \textup{m-PPOT}^{\alpha}_{\mathcal{B}}(\bm{p}, \bm{q}).
		\end{equation}
	\end{thm}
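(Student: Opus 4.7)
The plan is to upper bound $\text{POT}^{\alpha}(\bm{p}, \bm{q})$ by the cost of a cleverly constructed feasible plan, then invoke Proposition \ref{prop: transportation} to replace PPOT with m-PPOT. Let $\pi^{*}\in\mathbb{R}^{L\times n}$ be the optimal transport plan achieving $\text{PPOT}^{\alpha}(\bm{p},\bm{q})=\text{POT}^{\alpha}(\bm{c},\bm{q})$, so that by hypothesis the $i$-th row sum of $\pi^{*}$ equals $w_{i}$. The key construction is to \emph{split} each unit of mass that $\pi^{*}$ transports from prototype $c_{l}$ proportionally among the source samples in class $l$. Concretely, for each source index $j$ and each target index $k$, set
\begin{equation*}
\tilde{\pi}_{jk} \triangleq \frac{p_{j}}{r_{y_{j}}}\,\pi^{*}_{y_{j},k}.
\end{equation*}

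The second step is to verify that $\tilde{\pi}\in\Pi^{\alpha}(\bm{p},\bm{q})$. The row sum is $\sum_{k}\tilde{\pi}_{jk}=(p_{j}/r_{y_{j}})\,w_{y_{j}}\leqslant p_{j}$ because $w_{y_{j}}\leqslant r_{y_{j}}$. The column sum is
\begin{equation*}
\sum_{j}\tilde{\pi}_{jk}=\sum_{l=1}^{L}\Big(\sum_{j:y_{j}=l}\frac{p_{j}}{r_{l}}\Big)\pi^{*}_{l,k}=\sum_{l=1}^{L}\pi^{*}_{l,k}\leqslant q_{k},
\end{equation*}
since $\sum_{j:y_{j}=l}p_{j}=r_{l}$. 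The total mass is $\sum_{j,k}\tilde{\pi}_{jk}=\sum_{l}(w_{l}/r_{l})\sum_{j:y_{j}=l}p_{j}=\sum_{l}w_{l}=\alpha$. Hence $\tilde{\pi}$ is feasible.

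The third step bounds the cost. Using the triangle inequality on the $L_{2}$-distance, $d(f(x^{s}_{j}),f(x^{t}_{k}))\leqslant d_{j}+d(c_{y_{j}},f(x^{t}_{k}))$, so
\begin{equation*}
\text{POT}^{\alpha}(\bm{p},\bm{q})\leqslant\sum_{j,k}\tilde{\pi}_{jk}\,d_{j}+\sum_{j,k}\tilde{\pi}_{jk}\,d(c_{y_{j}},f(x^{t}_{k})).
\end{equation*}
The first sum telescopes to $\sum_{j}d_{j}\,(p_{j}w_{y_{j}}/r_{y_{j}})$, which is exactly $\sum_{j=1}^{m}(w_{y_{j}}/r_{y_{j}})p_{j}d_{j}$. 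The second sum collapses class by class: for each label $l$, the factor $\sum_{j:y_{j}=l}p_{j}/r_{l}=1$, so the sum reduces to $\sum_{l,k}\pi^{*}_{l,k}\,d(c_{l},f(x^{t}_{k}))=\text{PPOT}^{\alpha}(\bm{p},\bm{q})$. Finally, Proposition \ref{prop: transportation} gives $\text{PPOT}^{\alpha}(\bm{p},\bm{q})\leqslant\text{m-PPOT}^{\alpha}_{\mathcal{B}}(\bm{p},\bm{q})$, which closes the inequality.

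The main obstacle is getting the construction of $\tilde{\pi}$ right so that it is simultaneously (i) feasible for $\Pi^{\alpha}(\bm{p},\bm{q})$ with exactly $\alpha$ mass and (ii) amenable to a clean triangle-inequality decomposition whose cross terms reassemble into PPOT. The weights $p_{j}/r_{y_{j}}$ are exactly what makes both the column-marginal collapse and the class-by-class reassembly work; without that normalization the two sums will not simplify to the desired expressions. Everything else is bookkeeping.
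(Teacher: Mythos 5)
Your proof is correct and takes essentially the same route as the paper: your plan $\tilde{\pi}_{jk}=\frac{p_j}{r_{y_j}}\pi^{*}_{y_j,k}$ is exactly the composed plan $\pi^{\bm{p},\bm{c}}D\,\pi^{\bm{c},\bm{q}}$ of the paper's Lemma~\ref{lm: lm1} with the class-proportional plan of Lemma~\ref{lm: lm2} substituted for $\pi^{\bm{p},\bm{c}}$, and you use the triangle inequality and Proposition~\ref{prop: transportation} in the same way. The only difference is that you merge the paper's two lemmas (which pass through the intermediate quantity $\textup{POT}^{\alpha}(\bm{p},\bar{\bm{c}})$) into a single explicit feasible plan, which is a harmless streamlining.
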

	The proofs of theorem \ref{thm: thm2} and proposition \ref{prop: transportation} are included in Appendix.
	\subsection{UniDA based on m-PPOT}
	Our motivation is to minimize discrepancy between distributions of source and target common class data, meanwhile separating “known” and “unknown” data in both domains in training. We design the following losses for training. 
 
	\subsubsection{m-PPOT loss.}
	Based on theorem \ref{thm: thm2}, to minimize the discrepancy between $\bm{p}_{c}$ and $\bm{q}_{c}$, we first design the m-PPOT loss to minimize the second term in the bound of theorem \ref{thm: thm2}. We introduce the $\text{m-PPOT}^{\alpha}_{\mathcal{B}}(\bm{p}, \bm{q})$ as a loss:
	\begin{equation}\label{eq: otloss}
		 \mathcal{L}_{ot}  = \mathop{\mathbb{E}} \limits_{\mathcal{B} \in \Gamma}\left(\text{m-PPOT}^{\alpha}_{\mathcal{B}}(\bm{p}, \bm{q})\right),
	\end{equation}
	where $\mathop{\mathbb{E}}$ denotes the expectation over all target domain data index partitions in $\Gamma$. Using the mini-batch based optimization method, this term can be approximated by the partial OT problem $\text{POT}^\alpha(\bm{c}, \bm{q}_{\mathcal{B}_i})$ over each mini-batch, according to Eqn.~(\ref{eq:mppot}). The set of prototypes $\bm{c}$ is updated by exponential moving average as in \cite{xie2018learning}. We use the entropy regularized POT algorithm proposed by \cite{benamou2015iterative} to solve POT on mini-batch.
 
	\subsubsection{Reweighted entropy loss.}
 We further design entropy-based loss on target domain data to increase the prediction certainty. The solution $\pi^{*}$ to the $\text{m-PPOT}^{\alpha}_{\mathcal{B}}(\bm{p}, \bm{q})$ is a matrix measuring the matching between source prototypes and target features. Since the more easily a prototype (feature) can be transported, the more likely it belongs to a common class (“known” sample), we leverage the row/column sum of $\pi^{*}$ as indicator to identify unknown samples. Specifically, we first get the column sum of $\pi^{*}$ and multiply a constant $n / \alpha$ to make $\bm{w}^{t} \in \mathbb{R}^{n}$ satisfy $\Vert \bm{w}^{t}\Vert_{1} = n$. A reweighted entropy loss is formulated as 
	\begin{equation}\label{eq: peloss}
		\begin{gathered}
			\mathcal{L}_{pe}  = - \sum_{i=1}^{n}\sum_{j=1}^{L} w^{t}_{i} p_{ij} \log (p_{ij}), 
			w^{t}_{i} = \dfrac{n}{\alpha}\sum_{j=1}^{L}\pi^{*}_{ij},
		\end{gathered}		
	\end{equation}
	where $p_{ij}  \triangleq \sigma(h\circ f(x_{i}^{t}))_{j}$. We take this loss to increase the confidence of prediction for those target samples  seen as “known” samples. 
	
	Furthermore, we follow \cite {saito2020universal, saito2021ovanet} to suppress the model to generate overconfident predictions for target “unknown” samples by loss
	\begin{equation}\label{eq: neloss}
		\begin{gathered}
			\mathcal{L}_{ne} = - \sum_{i=1}^{n}\sum_{j=1}^{L} w^{u}_{i} p_{ij} \log (p_{ij}), 
			w^{u}_{i} = [1 - w^{t}_{i}]_{+},
		\end{gathered}
	\end{equation}
	where $w^{u}_{i}$ depends on $w^{t}_{i}$, and higher $w^{u}_{i}$ for a sample means higher confidence to be an “unknown” sample. Therefore, we use $\mathcal{L}_{ne}$ to reduce the confidence of those samples which are likely to be “unknown” samples.
	\subsubsection{Reweighted cross-entropy loss.}
	This loss is the classification loss defined in the source domain, based on the cross-entropy using labels of source domain data. Different to standard classification loss, we use the column sum of $\pi^{*}$ to compute weights $\bm{w}^{s}\in \mathbb{R}^{L}$ for measuring the confidence of the ``known'' source domain prototypes. Then we design the reweighted cross-entropy loss 
	\begin{equation}\label{eq: rceloss}
		\begin{gathered}
			\mathcal{L}_{rce} = -\sum_{i=1}^{m}\sum_{j=1}^{L}w^{s}_{j}\bm{1}(y_{i}=j)\log(\sigma(h\circ f(x_{i}^{s}))_{j})
		\end{gathered}
	\end{equation}
where $w^{s}_{j} = \dfrac{L}{\alpha}\sum_{i=1}^{n}\pi^{*}_{ij}$ is the weight of $j$-th source prototype representing $j$-th class center. The weights satisfy $\sum_{j=1}^{L} w^{s}_{j} = L$ and each of them represents the possibility that each category belongs to a common class. 
\cite{papyan2020prevalence} shows that the cross-entropy based loss could minimize the distance of features to class prototype. 
This implies that the reweighted cross-entropy loss approximately minimizes the first term in bound of theorem \ref{thm: thm2}, in which we use the row sum $\bm{w}^s$ of m-PPOT to approximate the row sum $\bm{w}$ of PPOT, and use the class-balanced sampling in implementation to enforce that $r_j, \forall j,$ are equal.
	
	\subsubsection{Training loss and details.}
	Our model is jointly optimized with the above loss terms, and the total training loss is
	\begin{equation} \label{eq: total}
		\mathcal{L} = \mathcal{L}_{rce} + \mathcal{L}_{ent} + \eta_1 \mathcal{L}_{ot},
	\end{equation}
	where $\mathcal{L}_{ent} = \eta_2 \mathcal{L}_{pe} - \eta_3 \mathcal{L}_{ne}.
	$ In implementation, we set
 $\eta_1 = 5$, $\eta_2 = 0.01$, and $\eta_3 = 2$ for all datasets.
The training process of our method is shown in Fig. \ref{fig: model}. In the beginning, we map data in both domains into feature space by the feature extractor. Source prototypes are updated by source features in every batch and then we compute the m-PPOT between the empirical distributions of source prototypes and target samples, which we also leverage the row sum and column sum of corresponding transport plan to reweight in the losses. $\mathcal{L}_{ot}$ aims to reduce the gap between the distribution of “known” samples in both domains, meanwhile $\mathcal{L}_{ent}$ enforces the ``known'' samples to have higher prediction confidence by decreasing their entropy, and the ``unknown'' samples to have lower prediction confidence by increasing the entropy, in the target domain. Since the classifiers are learned over the source domain data, this may align the ``known'' target domain data to the source domain data distribution, while pushing the ``unknown'' target domain data away from the source domain data distribution.

    \begin{table*}[!htbp]
		\centering
		\setlength{\tabcolsep}{8.5pt}
		\begin{tabular}{l|c|c|c|cccccc|c}
			\toprule
			\multirow{2}*{Method} & Office-31 &Office-Home&VisDA & \multicolumn{6}{c}{DomainNet (150/50/145)} & ~\cellcolor{white}\\
			\cmidrule{5-11}
			~&(10/10/11)\cellcolor{white}&(10/5/50)\cellcolor{white}&(6/3/3)\cellcolor{white}&P$\rightarrow$R&P$\rightarrow$S&R$\rightarrow$P&R$\rightarrow$S&S$\rightarrow$P&S$\rightarrow$R&Avg\\
			\midrule 
			UAN & 63.5&56.6&30.5& 41.9 & 39.1 & 43.6 & 38.7 & 39.0 & 43.7 & 41.0\\
			CMU &73.1&61.6& 34.6& 50.8 & 45.1 & 52.2 & 45.6 & 44.8 & 51.0 & 48.3\\
			DANCE & 82.3&63.9&42.8& 55.7 & 47.0 & 51.1 & 46.4 & 47.9 & 55.7 & 50.6\\
			DCC &  80.2&70.2&43.0&56.9 & 43.7 & 50.3 & 43.3 & 44.9 & 56.2 & 49.2\\
			OVANet & 86.5&71.8&53.1& 56.0 & 47.1 & 51.7 & 44.9 & 47.4 & 57.2 & 50.7\\
			GATE & 87.6&75.6&56.4& 57.4 & 48.7 & 52.8 & 47.6 & 49.5 & 56.3 & 52.1\\
			\midrule
			\bf PPOT& \bf90.4&\bf77.1&\bf73.8& \bf 67.8 & \bf 50.2 & \bf 60.1 & \bf 48.9 & \bf 52.8 & \bf 65.4 & \bf 57.5\\
			\bottomrule
		\end{tabular}
		\caption{H-score (\%) comparison on Office-31, Office-Home, VisDA and DomainNet for OPDA. Note that we only report the average H-score over all tasks on Office-31 on Office-Home, and the results for different tasks are in Appendix.
		}	
		\label{tab:OPDA}	
	\end{table*}

 \subsubsection{Parameter initialization by contrastive pre-training.}
Motivated by \cite{shen2022connect}, we use contrastive learning to pre-train our feature extractor. Specifically, we send both source and target unlabeled data into our feature extractor and use the contrastive learning method (MocoV2 \cite{chen2020improved}) to pre-train our feature extractor, then fine-tune the entire model on labeled source data, and take these parameters as our model's initial parameters. We empirically find that contrastive pre-training also works in UniDA setting in our experiments.

	\subsection{Hyper-parameters}
	We notice that $\alpha$ and $\beta$ are nearly impossible to calculate precisely in practice, so we propose a method to compute them approximately. We denote two scalars as $\tau_1$ and $\tau_2$, where $\tau_1 \in (0, 1]$, $\tau_2 > 0$. To simplify the notation, we use $s(x) = \max \sigma(h\circ f(x))$ to denote the prediction confidence of $x$. We define $\alpha$ and $\beta$ as  
	\begin{equation} \label{eq: alpha}
		\alpha = \sum_{j=1}^{n}   \dfrac{\bm{1}(s(x_{j}^{t}) \geqslant \tau_1)}{n}, 
		\beta = \sum_{i=1}^{L}   \dfrac{\bm{1}(w^{s}_{i} \geqslant \tau_2)}{L}.
	\end{equation}
	The motivation is that we use the proportion of high-confidence samples to estimate the ratio of “known” samples in the target domain, and similarly use the proportion of categories with high weights to approximate the ratio of common classes in the source domain.

	In experiments, we set $\tau_1 = 0.9$ and $\tau_2 = 1$. In the $i$-th iteration of training phase, we first calculate $\alpha^{i}$ by Eqn. (\ref{eq: alpha}), and update $\alpha$ by exponential moving average:
	\begin{equation*}
		\alpha^i \leftarrow \lambda_1 \alpha^i + (1 - \lambda_1) \alpha^{i-1}.
	\end{equation*}
	Then we use $\alpha^i$ as transport ratio and $\alpha^i / \beta^{i-1}$ as coefficient of Eqn. (\ref{v2p}) to compute $\mathcal{L}_{ot}$ and its by-product $\bm{w}^s$. After that we compute $\beta^i$ by Eqn. (\ref{eq: alpha}) and update it as same as $\alpha^i$:
	\begin{equation*}
		\beta^i \leftarrow \lambda_2 \beta^i + (1 - \lambda_2) \beta^{i-1},
	\end{equation*}
	where $\lambda_1, \lambda_2 \in [0, 1)$ are set to 0.001 in our experiments.
	
	Furthermore, to reduce the possible mistakes that identify a “known” sample as “unknown” sample, we retain only a fraction of $\{w^{u}_{i}\}_{i=1}^n$ that have larger values, and set the others as 0. The fraction is set to 25\% in all tasks.
 
	\section{Experiment}
	We evaluate our method on UniDA benchmarks. We solve three settings of UniDA, including OPDA, OSDA, and PDA but without using prior knowledge about the mismatch of source and target domain class label sets.
	\subsubsection{Datasets.} 
	\textbf{Office-31} \cite{office31} includes 4652 images in 31 categories from 3 domains: Amazon (\textbf{A}), DSLR (\textbf{D}), and Webcam (\textbf{W}). \textbf{Office-Home} \cite{officehome} consists of 15500 images in 65 categories, and it contains 4 domains: Artistic images (\textbf{A}), Clip-Art images (\textbf{C}), Product images (\textbf{P}), and Real-World images (\textbf{R}). \textbf{VisDA} \cite{visda} is a larger dataset which consists of 12 classes, including 150,000 synthetic images (\textbf{S}) and 50,000 images from real world (\textbf{R}). \textbf{DomainNet} \cite{domainnet} is one of the most challenging datasets in DA task with about 0.6 million images, which consists of 6 domains sharing 345 categories. We follow \cite{fu2020learning} to use 3 domains: Painting (\textbf{P}), Real (\textbf{R}), and Sketch (\textbf{S}).  Following  \citep{saito2021ovanet}, we show the number of common classes, source private classes, and target private classes in brackets in the header of each result of tables.
	\subsubsection{Evaluation.} 
	In PDA tasks, we compute the accuracy for all target samples. In OSDA and OPDA settings, the target private class samples should be classified as a single category named “unknown”. The samples with confidence less than threshold $\xi$ are identified as “unknown”, where $\xi$ is set to 0.75 in all experiments.
	Following \cite{fu2020learning}, we report the H-score metric for OSDA and OPDA which is the harmonic mean of the average accuracy on common and private class samples. 

\begin{table}[t]
		\centering
		\setlength{\tabcolsep}{4.5pt}
		\begin{tabular}{l|c|c|c}
			\hline
			\multirow{1}*{Method} & Type & {Office-Home} (25/40/0) & VisDA (6/6/0)\\
			\hline
			PADA & P & 62.1 & 53.5\\
			IWAN & P & 63.6 & 48.6\\
			ETN & P & 70.5 & 59.8\\
			AR & P & \bf 79.4 & \bf 88.8\\
			\hline
			\hline
			DCC & U & 70.9 & 72.4\\
			GATE & U & 73.9 & 75.6\\
			\bf PPOT& U & \bf 74.3 & \bf 83.0\\		
			\hline
		\end{tabular}
		\caption{Comparison of H-score (\%)  on Office-Home and VisDA for PDA setting. ``P'' and ``U'' denote PDA and 
UniDA methods, respectively with and without assuming the target label set is a subset of the source label set. 
		}
		\label{tab:PDA_OH_VS}
	\end{table}
 
	\subsubsection{Implementation.}
	We implement our method using Pytorch \cite{paszke2019pytorch} on a single Nvidia RTX A6000 GPU. Following previous works \cite{saito2021ovanet, chen2022geometric}, we use ResNet50 \cite{he2016deep} without last fully-connected layer as our feature extractor. A 256-dimensional bottleneck layer and prediction head $h$ is successively added after the feature extractor. We use MocoV2 \cite{chen2020improved} to contrastive pre-train our feature extractor, the number of epochs in pre-training is 100, batch size is 256, and learning rate is 0.03. 
	
	In training phase, we optimize the model using Nesterov momentum SGD with momentum of 0.9 and weight decay of $5 \times 10^{-4}$. Following \cite{ganin2015}, the learning rate decays with the factor of $(1+\alpha t)^{-\beta}$, where $t$ linearly changes from 0 to 1 in training, and we set $\alpha = 10, \beta = 0.75$. The batch size is set to 72 in all experiments except in DomainNet tasks where it is changed to 256. We train our model for 5 epochs (1000 iterations per epoch), and update source prototypes and $\alpha$ totally before every epoch. The initial learning rate is set to $1 \times 10^{-4}$ on Office-31, $5 \times 10^{-4}$ on Office-Home and VisDA, and 0.01 on DomainNet.

\subsection{Results and Comparisons}

		We compare our method with four PDA methods (PADA \cite{cao2018partial}, IWAN \cite{zhang2018importance}, ETN \cite{cao2019learning}, AR \cite{gu2021adversarial}), three OSDA methods (OSBP \cite{saito2018open}, STA \cite{liu2019separate}, ROS \cite{bucci2020effectiveness}) and six UniDA methods (UAN \cite{you2019universal}, CMU \cite{fu2020learning}, DANCE \cite{saito2020universal}, DCC \cite{li2021domain}, OVANet \cite{saito2021ovanet}, GATE \cite{chen2022geometric}). All the compared methods use the same backbone as ours. 
	
	    \begin{table}[t]
		\centering
		\setlength{\tabcolsep}{4.5pt}
		\begin{tabular}{l|c|c|c}
			\hline
			\multirow{1}*{Method} & Type & {Office-Home} (25/0/40) & VisDA (6/0/6)\\
			\hline
			STA & O & 61.1 & 64.1\\
			OSBP & O & 64.7 & 52.3\\
			ROS & O & \bf 66.2 & \bf 66.5\\
			\hline
			\hline
			DCC & U &61.7 & 59.6\\
			OVANet & U & 64.0 & 66.1\\
			GATE & U & 69.1 & 70.8\\
			\bf PPOT& U & \bf 70.0 & \bf 72.3\\		
			\hline
		\end{tabular}
		\caption{Comparison of H-score (\%)  on Office-Home and VisDA for OSDA setting. ``O'' and ``U'' denote OSDA and UniDA methods, respectively with and without assuming the source label set is a subset of the target label set. 
		}
		\label{tab:OSDA_OH_VS}
	\end{table}
 
	\subsubsection{OPDA setting.}
	Table \ref{tab:OPDA} shows the results of our method. Our method outperforms baselines and achieves state-of-the-art results on all four datasets. On Office-31 and Office-Home datasets, our method surpasses all baselines on average. In larger datasets, VisDA and DomainNet, our method brings more than 17\% improvement over previous methods on VisDA, and 5\% on DomainNet. In general, these results show that our method is suitable in UniDA tasks, especially on larger and challenging datasets.  

	\subsubsection{PDA and OSDA settings.}
	Following \cite{li2021domain}, we train our model without any prior knowledge of label space mismatch in PDA and OSDA settings. We report the results for PDA setting in Table \ref{tab:PDA_OH_VS}. We can see that our method achieves better results than other UniDA-based methods (denoted as ``U'') on both datasets. The ``P'' denotes the PDA methods using prior knowledge that only the source domain has private classes. The results of OSDA setting are shown in Table \ref{tab:OSDA_OH_VS}, our method still surpasses all UniDA methods and OSDA methods (denoted as ``O'') using prior knowledge on label space mismatch on Office-Home and VisDA datasets.

 	\subsection{Model Analysis}
	\subsubsection{Comparison of m-PPOT with m-POT.}
	To compare m-PPOT with m-POT (mini-batch based partial OT without using prototypes) in UniDA, we replace m-PPOT with m-POT in our method and use the average weight of samples in each class to replace the prototype weights $\bm{w}^s$ in Eqn.~(\ref{eq: rceloss}), and the corresponding method is denoted as ``POT''. As shown in Table \ref{tab:ablation studies}, PPOT surpasses POT in all three datasets, confirming that m-PPOT performs better than m-POT in UniDA.

  	\begin{table}[t]
		\centering
		\setlength{\tabcolsep}{4pt}
		\begin{tabular}{l|c|c|c}
			\hline
			Method & Office-31 &  VisDA & Office-Home\\
			\hline
			POT & 88.4 & 66.4 & 74.2\\
			PPOT (w/o CL) & 89.4 & 58.1 &  74.3\\
			PPOT (w/o $\mathcal{L}_{pe}$) & 88.4 & 71.1 & 76.5 \\
			PPOT (w/o $\mathcal{L}_{ne}$) & 89.6 & 67.8 & 74.4 \\ 
			PPOT (w/o reweight) & 86.5 & 69.9 & 74.7 \\ 	
			\bf PPOT & \bf90.4 & \bf73.8 & \bf77.1\\

			\hline
		\end{tabular}
		\caption{Ablation study for OPDA on Office-31, Office-Home and VisDA. ``CL'' means contrastive pre-training.}
		\label{tab:ablation studies}		
	\end{table}

	\subsubsection{Effect of contrastive pre-training.}
	To evaluate the effect of contrastive pre-training, the contrastive pre-training is removed and the feature extractor is replaced by a ResNet50 pre-trained on ImageNet. The results shown in Table \ref{tab:ablation studies} illustrate that performance degenerates in all experiments, especially in more challenging tasks such as VisDA. 
	Note that without contrastive pre-training, our model still surpasses state-of-the-art methods on Office-31 and VisDA and reaches a comparable result on Office-Home.

 	\begin{figure}[t]
		\centering
		\subfigure[Source domain]{ \includegraphics[width=0.224\textwidth]{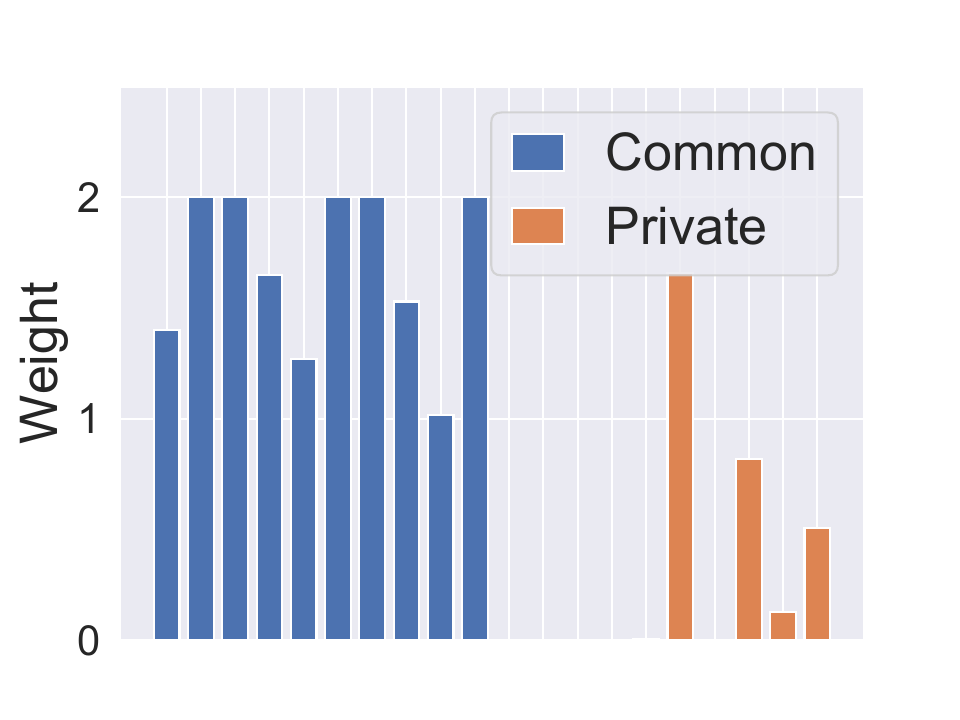} 
			\label{fig:source_weight}}
		\subfigure[Target domain]{ \includegraphics[width=0.224\textwidth]{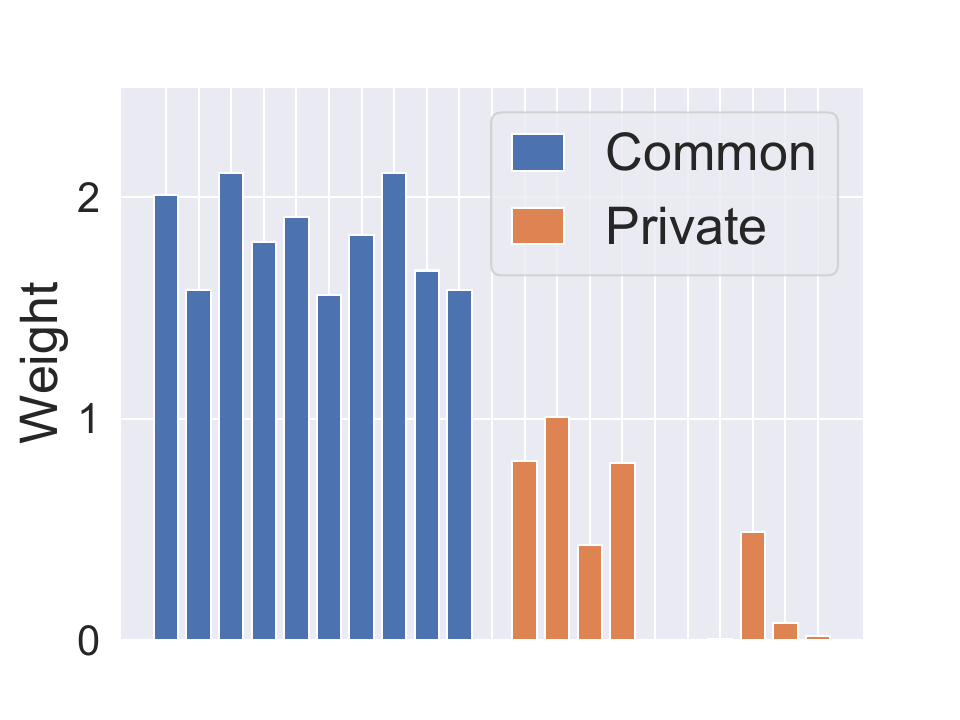} 
			\label{fig:target_weight}}
		\caption{(a) Class weight $\bm{w}^s$ in Eqn. (\ref{eq: rceloss}) on the source domain. (b) Average weight $\bm{w}^t$ in Eqn. (\ref{eq: peloss}) for each class on the target domain.  Task: W$\rightarrow$D on Office-31 for OPDA.}
		\label{fig:weight}
	\end{figure}

    \subsubsection{Effectiveness of reweighted entropy loss.}
	To evaluate this loss, we remove $\mathcal{L}_{pe}$ and $\mathcal{L}_{ne}$ in our model respectively. Table \ref{tab:ablation studies} shows that PPOT outperforms PPOT(w/o $\mathcal{L}_{pe}$) by 2\% and  PPOT(w/o $\mathcal{L}_{ne}$) by 6\% on VisDA dataset.

	\subsubsection{Effectiveness of reweighting strategy.}
	To evaluate the effectiveness of our reweighting strategy in Eqns.~(\ref{eq: rceloss}) and (\ref{eq: peloss}), we set $w_i^s = 1$ in Eqn.~(\ref{eq: rceloss}) and $w_j^t = 1$  in Eqn.~(\ref{eq: peloss}) for any $0\leqslant i \leqslant m$ and $0\leqslant j \leqslant n$. The results of Table \ref{tab:ablation studies} show that PPOT(w/o reweight) decreases at least 3\% more than PPOT in all datasets, which means that our reweighting strategy is important in our method. We further visualize the learned weights of source/target classes in UniDA task W$\rightarrow$D on Office-31 datasets, as shown in Fig.~\ref{fig:weight}. In both domains, most common classes have higher weights than private classes, which implies that our model can separate common and private classes effectively.

	\subsubsection{Sensitivity to hyper-parameters.}
	Figure \ref{fig:hyper-parameters} evaluates the sensitivity of our model to hyper-parameters $\tau_1$, $\tau_2$, $\eta_1$, $\eta_2$, $\eta_3$, and $\xi$. Results show that our model is relatively stable to $\tau_1$ and $\tau_2$ at the range of $[0.6, 0.95]$ and $[0.7, 1.1]$ respectively, as shown in Fig. \ref{fig:tau}. In Fig. \ref{fig:tau}, we can also see that the setting of threshold $\xi$ does not impact the performance much on our model in range $[0.5,0.9]$. Furthermore, Fig.~\ref{fig:eta} shows that our model is relatively stable to varying values of $\eta_1$, $\eta_2$, and $\eta_3$.

  \begin{figure}[t]
		\centering
		\subfigure[]{ \includegraphics[width=0.223\textwidth]{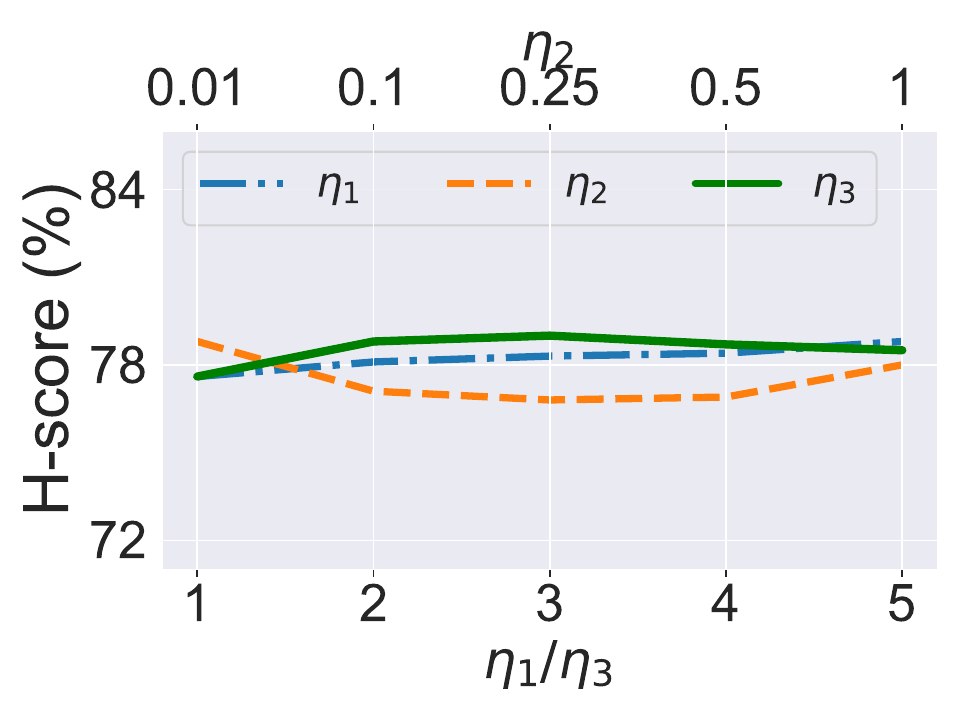} 
			\label{fig:eta}}
		\subfigure[]{ \includegraphics[width=0.223\textwidth]{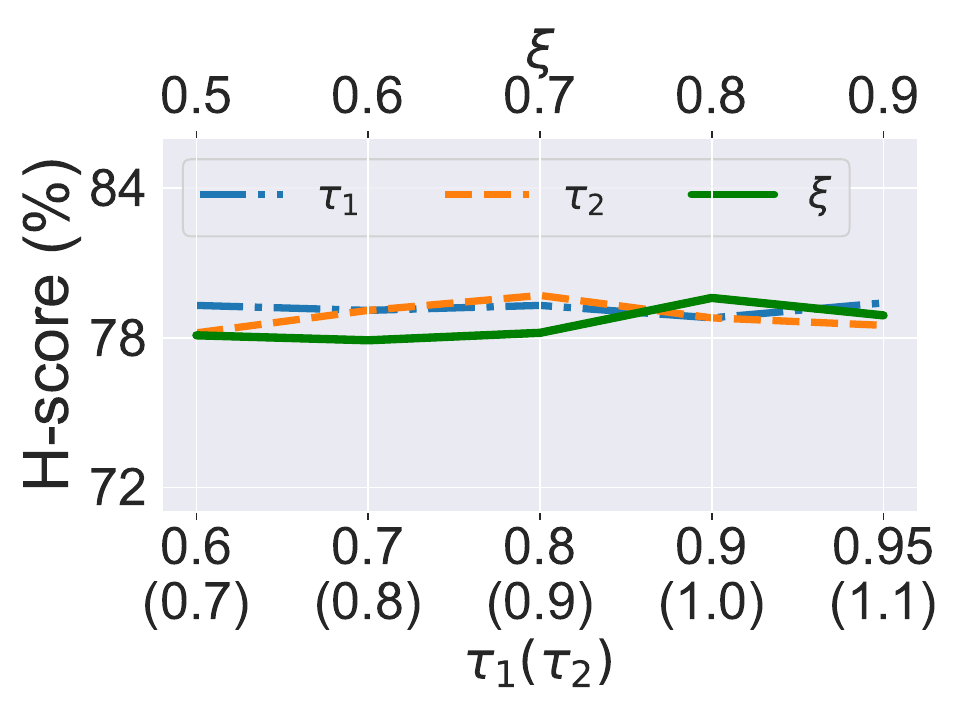} 
			\label{fig:tau}}
		\caption{ Sensitivity to hyper-parameters (a) $\eta_1$, $\eta_2$ and $\eta_3$ in Eqn.~(\ref{eq: total}), ~(b) $\tau_1$, $\tau_2$ in Eqn.~(\ref{eq: alpha}) and threshold $\xi$. All results are for the OPDA setting in task C$\rightarrow$A.}
		\label{fig:hyper-parameters}
	\end{figure}

 	\begin{figure}[t]
		\centering
		\subfigure[A$\rightarrow$P]{ \includegraphics[width=0.223\textwidth]{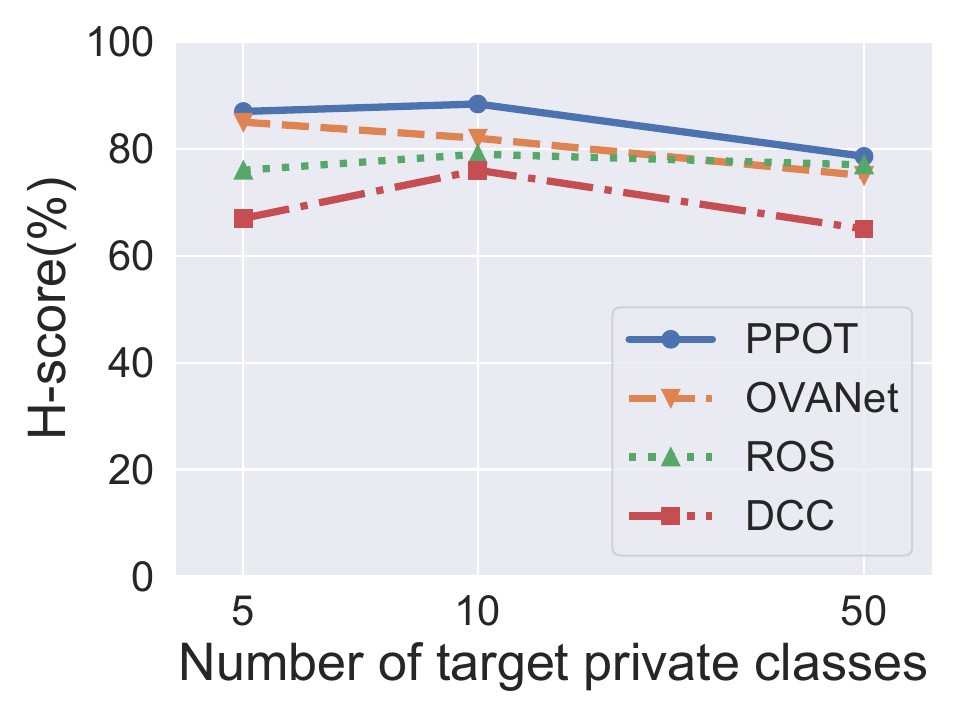} 
			\label{fig:classesA2P}}
		\subfigure[P$\rightarrow$R]{ \includegraphics[width=0.223\textwidth]{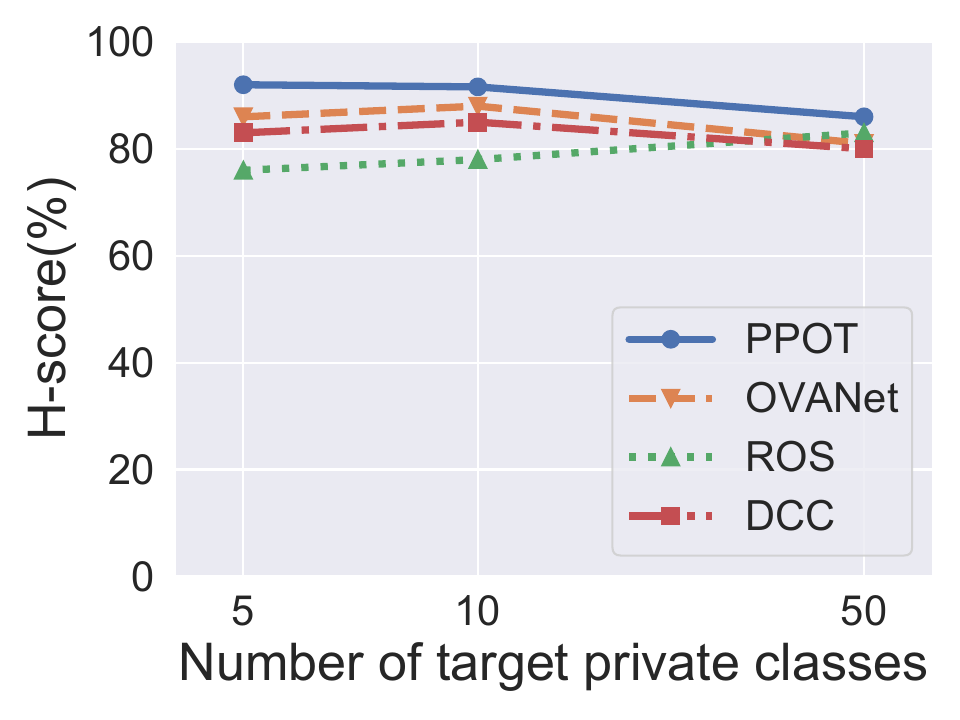} 
			\label{fig:classesP2R}}
		\caption{H-score curves of different methods with varying number of target private classes for OPDA tasks A$\rightarrow$P and P$\rightarrow$R.}
		\label{fig:class}
	\end{figure}
 
	\subsubsection{H-score with varying number of target private classes.} We evaluate our method with different numbers of target private classes. Results in A$\rightarrow$P and P$\rightarrow$R tasks are shown in Fig.~\ref{fig:class}, our method outperforms other baselines in all cases. It shows that our method is effective for OPDA with respect to different numbers of target domain private classes, and the performance marginally decreases with the increase of the number of target domain private classes.
	
	\section{Conclusion}
	In this paper, we propose to formulate the universal domain adaption (UniDA) as a partial optimal transport problem in deep learning framework. We propose a novel mini-batch based prototypical partial OT (m-PPOT) model for UniDA task, which is based on minimizing mini-batch prototypical partial optimal transport between two domain samples. We also introduce reweighting strategy based on the transport plan in UniDA. Experiments on four benchmarks show the effectiveness of our method for UniDA tasks including OPDA, PDA, OSDA settings.
 In the future, we plan to further theoretically analyze the mini-batch based m-PPOT, and apply it to more applications requiring partial alignments in deep learning framework.

    \bibliography{aaai23}

    \appendix
	\renewcommand{\thetable}{A-\arabic{table}}
\renewcommand{\theequation}{A-\arabic{equation}}
\renewcommand{\thefigure}{A-\arabic{table}}
\renewcommand{\thefigure}{A-\arabic{figure}}

\setcounter{proposition}{0}
\setcounter{figure}{0}
\setcounter{equation}{0}
\setcounter{table}{0}
\setcounter{thm}{0}

	\section{Mathematical Deductions}
	\subsection{Notation}
	We first recall some definitions in our paper. The source and target empirical distributions in feature space are denoted as $\bm{p} = \sum_{i=1}^{m}p_{i}\delta_{f(x_{i}^{s})}$ and $\bm{q} = \sum_{j=1}^{n}q_{j}\delta_{f(x_{j}^{t})}$ respectively, $\sum_{i=1}^{m}p_{i} =1$, $\sum_{j=1}^{n}q_{j} = 1$. $\bm{c} = \sum_{i=1}^{L}r_{i}\delta_{c_{i}}$ is the empirical distribution of source domain prototypes, and $r_{i} = \sum_{j: y_{j}=i}p_{j}$. With a slight abuse of notations, we denote the vector of data mass as $\bm{p} = (p_{1}, p_{2}, ... , p_{m})^{\top}$, $\bm{q} = (q_{1}, q_{2}, ... , q_{n})^{\top}$ and $\bm{c} = (r_{1}, r_{2}, ... , r_{L})^{\top}$. In this paper, we set $q_j = \frac{1}{n}$ for any $0 \leqslant j\leqslant n$.
	
	The elements of cost matrix $C^{\bm{p},\bm{q}}, C^{\bm{p},\bm{c}}, C^{\bm{c},\bm{q}}$ are defined as $C^{\bm{p},\bm{q}}_{ij} = d(f(x_{i}^{s}), f(x_{j}^{t})), C^{\bm{p},\bm{c}}_{ik} = d(f(x_{i}^{s}), c_k)), C^{\bm{c},\bm{q}}_{kj} = d(c_k, f(x_{j}^{t}))$, respectively. Here $d$ is the $L_2$ distance.
	
	Furthermore, the definitions of  $\textup{PPOT}^{\alpha}(\bm{p}, \bm{q})$ and $\textup{m-PPOT}^{\alpha}_{\mathcal{B}}(\bm{p}. \bm{q})$ are shown as follows:
	\begin{equation}
		\textup{PPOT}^{\alpha}(\bm{p}, \bm{q}) \triangleq \textup{POT}^{\alpha}(\bm{c}, \bm{q}) = \min_{\pi \in \Pi^{\alpha}(\bm{c},\bm{q})}\langle \pi , C^{\bm{c},\bm{q}} \rangle_F,
	\end{equation}
	\begin{equation}
		\text{m-PPOT}^{\alpha}_{\mathcal{B}}(\bm{p}, \bm{q}) \triangleq \frac{1}{k}\sum_{i=1}^{k}\text{POT}^{\alpha}(\bm{c}, \bm{q}_{\mathcal{B}_{i}}),~~ \mathcal{B} \in \Gamma,
	\end{equation}
	where $\mathcal{B}_{i}$ is the $i$-th index set of $b$ random target samples and their corresponding empirical distribution in feature space is denoted as $\bm{q}_{\mathcal{B}_{i}}$. $\mathcal{B} \triangleq \{\mathcal{B}_{i}\}_{i=1}^{k}$, satisfying $\mathcal{B}_{i} \bigcap \mathcal{B}_{j} = \emptyset$ and $\bigcup\limits_{i=1}^{k}\mathcal{B}_{i} = \{1,2,...,n\}$. $b\mid n$, $k =\frac{n}{b}$.
	\subsection{Proof of proposition 1}
	\begin{proposition}\label{prop: transport}
		Let $\pi^{\alpha}_{i}$ be the optimal transportation of $i$-th batch of $\text{m-PPOT}^{\alpha}_{\mathcal{B}}(\bm{p}, \bm{q})$. We extend $\pi_{i}^{\alpha}$ to a $L \times n$ matrix $\Pi_{i}^{\alpha}$  that pads zero entries to the column whose index does not belong to $\mathcal{B}_{i}$, then we have
		\begin{equation} \label{eq: transportation}		
			\dfrac{1}{k}\sum_{i=1}^{k}\Pi^{\alpha}_{i} \in \Pi^{\alpha}(\bm{c},\bm{q})			
		\end{equation}
		and
		\begin{equation} \label{eq: value}		
			\textup{PPOT}^{\alpha}(\bm{p}, \bm{q}) \leqslant \textup{m-PPOT}^{\alpha}_{\mathcal{B}}(\bm{p}, \bm{q}).				
		\end{equation}
	\end{proposition}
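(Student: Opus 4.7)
The plan is to first establish Eqn.~(A-3) by verifying that the averaged plan $\bar{\Pi} := \tfrac{1}{k}\sum_{i=1}^{k}\Pi_i^\alpha$ satisfies the three constraints defining $\Pi^\alpha(\bm c,\bm q)$, and then to deduce Eqn.~(A-4) by computing the Frobenius cost of $\bar{\Pi}$ and invoking the minimality in the definition of $\text{PPOT}^\alpha$.

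For the feasibility step, I would check the constraints one by one. Non-negativity is inherited from each $\pi_i^\alpha$, and the total-mass constraint is immediate: $\mathbb 1_L^\top\bar{\Pi}\mathbb 1_n = \tfrac{1}{k}\sum_{i=1}^k \mathbb 1_L^\top \pi_i^\alpha \mathbb 1_b = \tfrac{1}{k}\cdot k\alpha = \alpha$. The row-sum constraint also passes through easily because zero-padding preserves row sums, so $\Pi_i^\alpha\mathbb 1_n = \pi_i^\alpha\mathbb 1_b \leqslant \bm c$ for every $i$, and averaging retains the inequality. The column-sum constraint is where the partition hypothesis becomes essential: for any target index $j\in\{1,\dots,n\}$, the partition property of $\mathcal B$ supplies a unique $i_\ast$ with $j\in\mathcal B_{i_\ast}$, and the $j$-th column of every other $\Pi_i^\alpha$ is identically zero by construction. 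Hence $(\bar{\Pi}^\top\mathbb 1_L)_j$ collapses to $\tfrac{1}{k}$ times the corresponding column sum of $\pi_{i_\ast}^\alpha$, which is bounded by $\tfrac{1}{k}\cdot\tfrac{1}{b} = \tfrac{1}{n} = q_j$ since each atom of the batch empirical distribution $\bm q_{\mathcal B_{i_\ast}}$ carries mass $1/b$.

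For the inequality in Eqn.~(A-4), the key observation is that zero-padding preserves Frobenius inner products: by construction $\langle \Pi_i^\alpha, C^{\bm c,\bm q}\rangle_F = \langle \pi_i^\alpha, C^{\bm c,\bm q_{\mathcal B_i}}\rangle_F = \text{POT}^\alpha(\bm c,\bm q_{\mathcal B_i})$, because the only potentially nonzero entries of $\Pi_i^\alpha$ sit in columns indexed by $\mathcal B_i$, exactly where the two cost matrices agree. Linearity of the inner product then gives $\langle \bar{\Pi}, C^{\bm c,\bm q}\rangle_F = \text{m-PPOT}^\alpha_{\mathcal B}(\bm p,\bm q)$. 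Combining this with Step 1 and the definition $\text{PPOT}^\alpha(\bm p,\bm q) = \min_{\pi\in\Pi^\alpha(\bm c,\bm q)}\langle \pi, C^{\bm c,\bm q}\rangle_F$ yields the stated bound.

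The main obstacle is the column-sum verification: it is the only place where the partition structure of $\mathcal B$ and the uniform batch mass $1/b$ must be combined carefully to recover the per-coordinate bound $q_j = 1/n$. Everything else reduces to routine feasibility-plus-minimality manipulations in linear programming.
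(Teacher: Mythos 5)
Your proposal is correct and follows essentially the same route as the paper's proof: verify that the zero-padded, averaged plan $\tfrac{1}{k}\sum_i \Pi_i^\alpha$ satisfies the marginal and total-mass constraints of $\Pi^{\alpha}(\bm{c},\bm{q})$, then bound $\textup{PPOT}^{\alpha}$ by the cost of this feasible plan, which equals $\textup{m-PPOT}^{\alpha}_{\mathcal{B}}$ since padding preserves the Frobenius cost. Your per-coordinate column-sum argument (unique batch containing each index $j$, giving $\tfrac{1}{k}\cdot\tfrac{1}{b}=q_j$) is just a pointwise rephrasing of the paper's mask-vector computation $\tfrac{1}{k}\sum_i \bm{m}^i = \mathbb{1}_n$, so there is no substantive difference.
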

	\begin{proof}
		The proof of  
		\begin{equation}			
			\dfrac{1}{k}\sum_{i=1}^{k}\Pi^{\alpha}_{i} \in \Pi^{\alpha}(\bm{c},\bm{q})			
		\end{equation}
		is equivalent to proving
		\begin{equation} \label{eq: prop1}
			\begin{aligned}			
				&(\dfrac{1}{k}\sum_{i=1}^{k}\Pi^{\alpha}_{i})\mathbb{1}_{n} \leqslant \bm{c}, \\
				&(\dfrac{1}{k}\sum_{i=1}^{k}\Pi^{\alpha}_{i})^{\top}\mathbb{1}_{L} \leqslant \bm{q}, \\
				&\mathbb{1}_{L}^{\top}(\dfrac{1}{k}\sum_{i=1}^{k}\Pi^{\alpha}_{i})\mathbb{1}_{n} = \alpha,
			\end{aligned}		
		\end{equation}
		according to the definition of $\Pi^{\alpha}(\bm{c},\bm{q})$.
		Note that $\pi^{\alpha}_{i} \in \Pi^{\alpha}(\bm{c}, \bm{q}_{\mathcal{B}_i})$ satisfies 
		\begin{equation}
			\pi^{\alpha}_{i}\mathbb{1}_{b} \leqslant \bm{c}, ~~(\pi^{\alpha}_{i})^{\top}\mathbb{1}_{L} \leqslant \bm{q}_{\mathcal{B}_i}, ~~\mathbb{1}_{L}^{\top}\pi^{\alpha}_{i}\mathbb{1}_{b} = \alpha.
		\end{equation}
		Combining with the definition of $\Pi^{\alpha}_{i}$, we have
		\begin{equation} 
			\begin{aligned}			
				&\Pi^{\alpha}_{i}\mathbb{1}_{n} = \pi^{\alpha}_{i}\mathbb{1}_{b} \leqslant \bm{c}, \\
				&(\Pi^{\alpha}_{i})^{\top}\mathbb{1}_{L}   \leqslant \bar{\bm{q}}_{\mathcal{B}_i} = \bm{m}^{i} \odot \bm{q}, \\
				&\mathbb{1}_{L}^{\top}\Pi^{\alpha}_{i}\mathbb{1}_{n} = \mathbb{1}_{L}^{\top}\pi^{\alpha}_{i}\mathbb{1}_{b}=\alpha \\
			\end{aligned}		
		\end{equation}
		where $\bar{\bm{q}}_{\mathcal{B}_i} \in \mathbb{R}^{n}$ is the extension of $\bm{q}_{\mathcal{B}_i}$ by padding zero entries to the dimension whose index does not belong to $\mathcal{B}_i$, $\odot$ corresponds to entry-wise product and $\bm{m}^{i}$ is a $n$ dimensional vector with element satisfying that
		\begin{equation} \label{eq: m}
			m_{j}^i=\begin{cases} \dfrac{n}{b}=k, & \text{if} ~~ j \in \mathcal{B}_{i},\\
				\quad 0, & \text{otherwise}.\\
			\end{cases}
		\end{equation}
		We have
		\begin{equation} \label{eq: margin}
			\begin{aligned}			
				&(\dfrac{1}{k}\sum_{i=1}^{k}\Pi^{\alpha}_{i})\mathbb{1}_{n} = \dfrac{1}{k}\sum_{i=1}^{k}(\Pi^{\alpha}_{i}\mathbb{1}_{n}) \leqslant \bm{c}, \\
				&(\dfrac{1}{k}\sum_{i=1}^{k}\Pi^{\alpha}_{i})^{\top}\mathbb{1}_{L} \leqslant (\dfrac{1}{k}\sum_{i=1}^{k} \bm{m}^{i}) \odot \bm{q}, \\
				&\mathbb{1}_{L}^{\top}(\dfrac{1}{k}\sum_{i=1}^{k}\Pi^{\alpha}_{i})\mathbb{1}_{n} = \dfrac{1}{k}\sum_{i=1}^{k}(\mathbb{1}_{L}^{\top}\Pi^{\alpha}_{i}\mathbb{1}_{n}) =\alpha,
			\end{aligned}		
		\end{equation}
		Combining (\ref{eq: m}) with the conditions $\mathcal{B}_{i} \bigcap \mathcal{B}_{j} = \emptyset$ and $\bigcup\limits_{i=1}^{k}\mathcal{B}_{i} = \{1,2,...,n\}$, we find that
		\begin{equation}
			(\sum_{i=1}^{k} \bm{m}^{i})_{j} = k\sum_{i=1}^{k} \bm{1}(j \in \mathcal{B}_{i}) = k,
		\end{equation}
		it means that
		\begin{equation}
			\dfrac{1}{k}\sum_{i=1}^{k} \bm{m}^{i} = \mathbb{1}_n.
		\end{equation}
		Therefore Eqn.~(\ref{eq: prop1}) has been proved. So far, we have proved Eqn.(\ref{eq: transportation}), which means that the following inequality holds:
		\begin{equation}
			\begin{aligned}
				&\textup{PPOT}^{\alpha}(\bm{p}, \bm{q}) \leqslant \langle \dfrac{1}{k}\sum_{i=1}^{k}\Pi^{\alpha}_{i} , C^{\bm{c},\bm{q}} \rangle_F \\
				&= \dfrac{1}{k}\sum_{i=1}^{k}\langle\Pi^{\alpha}_{i} , C^{\bm{c},\bm{q}} \rangle_F\\
				&= \dfrac{1}{k}\sum_{i=1}^{k} \text{POT}^{\alpha}(\bm{c}, \bm{q}_{\mathcal{B}_{i}})\\
				&=\textup{m-PPOT}^{\alpha}_{\mathcal{B}}(\bm{p}, \bm{q}).
			\end{aligned}
		\end{equation}
	\end{proof}

	\subsection{Proof of theorem 1}
	\begin{thm}\label{thm: thmA2}
		Considering two distributions $\bm{p}$ and $\bm{q}$, the distance between $f(x_{i}^{s})$ and corresponding prototype $c_{y_{i}}$ is denoted as $d_{i} \triangleq d(f(x_{i}^{s}), c_{y_{i}})$. The row sum of the optimal transportation of $\text{PPOT}^{\alpha}(\bm{p}, \bm{q})$ is denoted as $\bm{w} = (w_1, w_2, ... , w_L )^{\top}$, $r_{i} = \sum_{j: y_{j}=i}p_{j}$. Then we have 
		\begin{equation}
			\textup{POT}^{\alpha}(\bm{p}, \bm{q}) \leqslant \sum_{i=1}^{m} \dfrac{w_{y_i}}{r_{y_i}} p_i d_{i} + \textup{m-PPOT}^{\alpha}_{\mathcal{B}}(\bm{p}, \bm{q}).
		\end{equation}
	\end{thm}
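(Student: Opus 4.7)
My plan is to upper-bound $\textup{POT}^{\alpha}(\bm{p},\bm{q})$ by exhibiting an explicit feasible coupling in $\Pi^{\alpha}(\bm{p},\bm{q})$ that is built from the optimal PPOT plan, controlling its cost via the triangle inequality on the underlying $L_{2}$ distance $d$, and then finishing with Proposition~1 to replace $\textup{PPOT}$ by $\textup{m-PPOT}$ on the right-hand side.

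Concretely, let $\pi^{\ast}\in\Pi^{\alpha}(\bm{c},\bm{q})$ be the optimizer of $\textup{PPOT}^{\alpha}(\bm{p},\bm{q})$, so that by definition its row sum is $\bm{w}$, i.e.\ $w_{k}=\sum_{j}\pi^{\ast}_{k,j}$. First I would ``disaggregate'' this prototype-level plan down to the level of individual source samples by setting
$$
\tilde{\pi}_{ij}\;=\;\frac{p_{i}}{r_{y_{i}}}\,\pi^{\ast}_{y_{i},j},\qquad i=1,\ldots,m,\; j=1,\ldots,n,
$$
i.e.\ splitting the mass that $\pi^{\ast}$ ships out of prototype $c_{k}$ among the source samples of class $k$ in proportion to their masses $p_{i}$. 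Feasibility $\tilde{\pi}\in\Pi^{\alpha}(\bm{p},\bm{q})$ then reduces to three short checks: (i) the $i$-th row sum is $(p_{i}/r_{y_{i}})w_{y_{i}}\leqslant p_{i}$, since $w_{y_{i}}\leqslant r_{y_{i}}$ is forced by $\pi^{\ast}\mathbb{1}_{n}\leqslant\bm{c}$; (ii) using $\sum_{i:y_{i}=k}p_{i}=r_{k}$, the $j$-th column sum telescopes to $\sum_{k}\pi^{\ast}_{k,j}\leqslant q_{j}$; (iii) the total mass is $\sum_{k,j}\pi^{\ast}_{k,j}=\alpha$.

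Next, I would apply the triangle inequality $d(f(x_{i}^{s}),f(x_{j}^{t}))\leqslant d(f(x_{i}^{s}),c_{y_{i}})+d(c_{y_{i}},f(x_{j}^{t}))=d_{i}+C^{\bm{c},\bm{q}}_{y_{i},j}$ inside the cost of $\tilde{\pi}$. The $d_{i}$-contribution yields $\sum_{i,j}\tilde{\pi}_{ij}d_{i}=\sum_{i}(w_{y_{i}}/r_{y_{i}})\,p_{i}\,d_{i}$ because $\sum_{j}\pi^{\ast}_{y_{i},j}=w_{y_{i}}$, while the second contribution telescopes class by class via $\sum_{i:y_{i}=k}p_{i}/r_{k}=1$ and gives exactly $\langle\pi^{\ast},C^{\bm{c},\bm{q}}\rangle_{F}=\textup{PPOT}^{\alpha}(\bm{p},\bm{q})$. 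Since $\tilde{\pi}$ is admissible, $\textup{POT}^{\alpha}(\bm{p},\bm{q})\leqslant\langle\tilde{\pi},C^{\bm{p},\bm{q}}\rangle_{F}$, and combining with Proposition~1 ($\textup{PPOT}^{\alpha}(\bm{p},\bm{q})\leqslant\textup{m-PPOT}^{\alpha}_{\mathcal{B}}(\bm{p},\bm{q})$) delivers the claimed inequality.

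The main obstacle is the feasibility verification for $\tilde{\pi}$: the bound $w_{k}\leqslant r_{k}$ coming from $\pi^{\ast}\mathbb{1}_{n}\leqslant\bm{c}$ is precisely what keeps each disaggregated row sum below $p_{i}$, and the choice of weights $p_{i}/r_{y_{i}}$ is what makes the column constraints and the second telescoping sum collapse back to the PPOT cost. Once that bookkeeping is in place, the triangle inequality and Proposition~1 complete the argument with no further calculation of note.
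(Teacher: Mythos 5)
Your proof is correct and is essentially the paper's argument in condensed form: your disaggregated coupling $\tilde{\pi}_{ij}=\frac{p_i}{r_{y_i}}\pi^{*}_{y_i,j}$ is exactly the composite plan $S=\pi^{\bm{p},\bm{c}}D\,\pi^{\bm{c},\bm{q}}$ that the paper builds in its Lemma~1 once the explicit class-assignment plan of its Lemma~2 ($\pi_{ik}=\frac{w_k}{r_k}p_i\bm{1}(y_i=k)$) is substituted for $\pi^{\bm{p},\bm{c}}$, and your feasibility checks, triangle-inequality split of the cost, and final appeal to Proposition~1 mirror the paper's steps. The only difference is organizational: you bypass the intermediate quantity $\textup{POT}^{\alpha}(\bm{p},\bar{\bm{c}})$ and the two-lemma decomposition, which makes the argument shorter, at the cost of the paper's reusable intermediate bound $\textup{POT}^{\alpha}(\bm{p},\bm{q})\leqslant\textup{POT}^{\alpha}(\bm{p},\bar{\bm{c}})+\textup{PPOT}^{\alpha}(\bm{p},\bm{q})$.
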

	The proof of theorem 1 can be decomposed by two Lemmas.
	\begin{lemma} \label{lm: lm1}
		Consider three distributions $\bm{p}, \bm{q}$ and $\bm{c}$, we denote the optimal transportation of $\textup{PPOT}^{\alpha}(\bm{p}, \bm{q})$ as $\pi^{\bm{c}, \bm{q}} \in \mathbb{R}^{L \times n}$, and denote the row sum of $\pi^{\bm{c}, \bm{q}}$ as $\bm{w} \triangleq \pi^{\bm{c}, \bm{q}} \mathbb{1}_n \leqslant \bm{c}$, $\bm{w} = (w_1, w_2, \dots , w_L)$. We define an empirical distribution $\bar{\bm{c}} = \sum\limits_{i=1}^{L} w_i \delta_{c_i}$. 
		Then, we have 
		\begin{equation}
			\textup{POT}^{\alpha}(\bm{p}, \bm{q}) \leqslant \textup{POT}^{\alpha}(\bm{p}, \bar{\bm{c}}) + \textup{PPOT}^{\alpha}(\bm{p}, \bm{q}).
		\end{equation}
	\end{lemma}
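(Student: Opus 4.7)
\medskip

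\noindent\textbf{Proof proposal.} The plan is to use a gluing construction that is standard in optimal transport: build a transport plan from $\bm{p}$ to $\bm{q}$ by composing a plan from $\bm{p}$ to the intermediate measure $\bar{\bm{c}}$ with the PPOT plan from $\bm{c}$ to $\bm{q}$, then bound its cost via the triangle inequality of the $L_2$ distance $d$.

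First I would observe that, since $\pi^{\bm{c},\bm{q}}$ is the optimal plan for $\textup{PPOT}^{\alpha}(\bm{p},\bm{q})$, its entries sum to $\alpha$, so the intermediate measure $\bar{\bm{c}} = \sum_{i=1}^{L} w_i \delta_{c_i}$ has total mass exactly $\alpha$. Consequently, any feasible $\pi^{1} \in \Pi^{\alpha}(\bm{p},\bar{\bm{c}})$ must satisfy $(\pi^{1})^{\top}\mathbb{1}_{m} = \bm{w}$ (the column-sum constraint is saturated). Let $\pi^{1}$ be the optimum for $\textup{POT}^{\alpha}(\bm{p},\bar{\bm{c}})$ and write $\pi^{2}\triangleq \pi^{\bm{c},\bm{q}}$, so that by construction $\pi^{2}\mathbb{1}_{n} = \bm{w}$ as well.

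Next I would define the glued plan
\begin{equation*}
\pi^{3}_{ij} \triangleq \sum_{k=1}^{L}\frac{\pi^{1}_{ik}\,\pi^{2}_{kj}}{w_{k}},
\end{equation*}
with the convention $0/0 = 0$ (if $w_{k}=0$ then both $\pi^{1}_{\cdot k}$ and $\pi^{2}_{k\cdot}$ vanish, so the term is consistently zero). A routine marginal check, using $\sum_{j}\pi^{2}_{kj}=w_{k}$ and $\sum_{i}\pi^{1}_{ik}=w_{k}$, yields $\pi^{3}\mathbb{1}_{n}=\pi^{1}\mathbb{1}_{L}\leqslant \bm{p}$, $(\pi^{3})^{\top}\mathbb{1}_{m}=(\pi^{2})^{\top}\mathbb{1}_{L}\leqslant \bm{q}$, and $\mathbb{1}_{m}^{\top}\pi^{3}\mathbb{1}_{n}=\alpha$. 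Hence $\pi^{3}\in\Pi^{\alpha}(\bm{p},\bm{q})$ is a feasible partial transport plan.

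Finally, the cost of $\pi^{3}$ is controlled via the triangle inequality applied to each $d(f(x_{i}^{s}),f(x_{j}^{t}))\leqslant d(f(x_{i}^{s}),c_{k})+d(c_{k},f(x_{j}^{t}))$, inserted inside the triple sum defining $\langle \pi^{3}, C^{\bm{p},\bm{q}}\rangle_{F}$. Splitting the resulting expression and collapsing the $\pi^{2}_{kj}/w_{k}$ (resp.\ $\pi^{1}_{ik}/w_{k}$) factors using the marginal identities reduces the two halves to $\langle \pi^{1}, C^{\bm{p},\bm{c}}\rangle_{F}=\textup{POT}^{\alpha}(\bm{p},\bar{\bm{c}})$ and $\langle \pi^{2}, C^{\bm{c},\bm{q}}\rangle_{F}=\textup{PPOT}^{\alpha}(\bm{p},\bm{q})$, respectively. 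Since $\textup{POT}^{\alpha}(\bm{p},\bm{q})\leqslant \langle \pi^{3}, C^{\bm{p},\bm{q}}\rangle_{F}$ by optimality, the claimed inequality follows. The main subtlety is simply ensuring the gluing is well-defined when some $w_{k}$ vanish and verifying that the mass-$\alpha$ constraint is preserved; both are handled by the observation that $\bar{\bm{c}}$ has total mass exactly $\alpha$, forcing saturation of the intermediate marginal.
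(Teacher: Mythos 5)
Your proposal is correct and follows essentially the same route as the paper's proof: the glued plan $\pi^{3}_{ij}=\sum_{k}\pi^{1}_{ik}\pi^{2}_{kj}/w_{k}$ is exactly the paper's $S=\pi^{\bm{p},\bm{c}}D\,\pi^{\bm{c},\bm{q}}$ with $D=\mathrm{diag}(1/w_{1},\dots,1/w_{L})$, followed by the same marginal verification and triangle-inequality split of the cost. Your explicit treatment of the $w_{k}=0$ case and of the saturation $(\pi^{1})^{\top}\mathbb{1}_{m}=\bm{w}$ is a small extra care that the paper leaves implicit, but it does not change the argument.
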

	\begin{proof}
		Let $\pi^{\bm{p}, \bm{c}} \in \mathbb{R}^{m \times L}$ be the optimal transportation matrix of $\textup{POT}^{\alpha}(\bm{p}, \bar{\bm{c}})$. Obviously, due to $\Vert \bm{w}\Vert_{1} = \alpha$, $\textup{POT}^{\alpha}(\bm{p}, \bar{\bm{c}})$ satisfies
		\begin{equation}
			\textup{POT}^{\alpha}(\bm{p}, \bar{\bm{c}}) = \min_{\pi \in \bar{\Pi}^{\alpha}(\bm{p}, \bar{\bm{c}})}  \langle \pi , C^{\bm{p},\bm{c}} \rangle_F,
		\end{equation}
		$\bar{\Pi}^{\alpha}(\bm{p}, \bar{\bm{c}}) = \{ \pi \mid \pi \mathbb{1}_{L} \leqslant \bm{p}, \pi^{\top}\mathbb{1}_m = \bm{w}, \mathbb{1}_m^{\top}\pi \mathbb{1}_L = \alpha \}$.
		
		Therefore, $\pi^{\bm{p}, \bm{c}}$ and $\pi^{\bm{c}, \bm{q}}$ satisfy these equations:
		\begin{align}
			&\pi^{\bm{p}, \bm{c}} \mathbb{1}_L \leqslant \bm{p}, \quad \mathbb{1}_m^{\top} \pi^{\bm{p}, \bm{c}} = \bm{w}^{\top}, \label{eq: pi1_cond}\\
			&\pi^{\bm{c}, \bm{q}} \mathbb{1}_n = \bm{w},\quad \mathbb{1}_L^{\top} \pi^{\bm{c}, \bm{q}}  \leqslant \bm{q}^{\top}, \label{eq: pi2_cond}
		\end{align}
		
		and then we define a $m \times n$ dimensional matrix $S$ as:
		\begin{equation*}
			S \triangleq \pi^{\bm{p}, \bm{c}}  D ~ \pi^{\bm{c}, \bm{q}},
		\end{equation*}
		where 
		\begin{equation*}
			D = \left(
			\begin{array}{cccc}
				\frac{1}{w_1} &  &  & \\
				& \frac{1}{w_2} &  & \\
				&  & \ddots & \\
				&  &  & \frac{1}{w_L}\\
			\end{array} \right).
		\end{equation*}
		Notice that $S \in \Pi^{\alpha}(\bm{p}, \bm{q})$ because according to Eqns.(\ref{eq: pi1_cond}) and (\ref{eq: pi2_cond}), we have
		\begin{equation*}
			\begin{aligned}
				S \mathbb{1}_n &= \pi^{\bm{p}, \bm{c}}  D ~ \pi^{\bm{c}, \bm{q}}\mathbb{1}_n = \pi^{\bm{p}, \bm{c}}  D ~\bm{w}
				= \pi^{\bm{p}, \bm{c}} \mathbb{1}_L \leqslant \bm{p}, \\
				\mathbb{1}_m^{\top} S &= \mathbb{1}_m^{\top} \pi^{\bm{p}, \bm{c}}  D ~ \pi^{\bm{c}, \bm{q}} = \bm{w}^{\top}D ~ \pi^{\bm{c}, \bm{q}} = \mathbb{1}_L^{\top}\pi^{\bm{c}, \bm{q}} \leqslant \bm{q}^{\top}.
			\end{aligned}
		\end{equation*}
		It means that $S$ is a transport plan of $\textup{POT}^{\alpha}(\bm{p}, \bm{q})$, so
		\begin{equation*}
			\begin{aligned}
				&\textup{POT}^{\alpha}(\bm{p}, \bm{q}) \leqslant \langle S , C^{\bm{p},\bm{q}} \rangle_F = \sum_{i=1}^{m}
				\sum_{j=1}^{n}C^{\bm{p},\bm{q}}_{ij}S_{ij} \\
				&= \sum_{i=1}^{m}\sum_{j=1}^{n}\sum_{k=1}^{L}\frac{1}{w_k}C^{\bm{p},\bm{q}}_{ij}\pi^{\bm{p}, \bm{c}}_{ik}
				\pi^{\bm{c}, \bm{q}}_{kj} \\
				&= \sum_{i,j,k}\frac{1}{w_k}d(f(x_{i}^{s}),f(x_{j}^{t}) )\pi^{\bm{p}, \bm{c}}_{ik}
				\pi^{\bm{c}, \bm{q}}_{kj}\\
				&\leqslant \sum_{i,j,k}\frac{1}{w_k}d(f(x_{i}^{s}),c_{k})\pi^{\bm{p}, \bm{c}}_{ik}
				\pi^{\bm{c}, \bm{q}}_{kj}  + \sum_{i,j,k}\frac{1}{w_k}d(c_{k},f(x_{j}^{t}) )\pi^{\bm{p}, \bm{c}}_{ik}
				\pi^{\bm{c}, \bm{q}}_{kj}\\
				&= \sum_{i,j,k}\frac{1}{w_k}C^{\bm{p},\bm{c}}_{ik}\pi^{\bm{p}, \bm{c}}_{ik}
				\pi^{\bm{c}, \bm{q}}_{kj}  + \sum_{i,j,k}\frac{1}{w_k}C^{\bm{c},\bm{q}}_{kj}\pi^{\bm{p}, \bm{c}}_{ik}
				\pi^{\bm{c}, \bm{q}}_{kj}\\
				&= \sum_{i,k}\frac{1}{w_k}C^{\bm{p},\bm{c}}_{ik}\pi^{\bm{p}, \bm{c}}_{ik}
				\sum_{j}\pi^{\bm{c}, \bm{q}}_{kj}  + \sum_{k,j}\frac{1}{w_k}C^{\bm{c},\bm{q}}_{kj}\pi^{\bm{c}, \bm{q}}_{kj}\sum_{i}\pi^{\bm{p}, \bm{c}}_{ik}\\
				&= \sum_{i,k}C^{\bm{p},\bm{c}}_{ik}\pi^{\bm{p}, \bm{c}}_{ik} + \sum_{k,j}C^{\bm{c},\bm{q}}_{kj}\pi^{\bm{c}, \bm{q}}_{kj}\\
				&=\textup{POT}^{\alpha}(\bm{p}, \bar{\bm{c}}) + \textup{PPOT}^{\alpha}(\bm{p}, \bm{q}).
			\end{aligned}
		\end{equation*}
	\end{proof}
	
	\begin{lemma} \label{lm: lm2}
		Consider two distributions $\bm{p}$ and $\bm{c}$, the definitions of $\textup{POT}^{\alpha}(\bm{p}, \bar{\bm{c}})$ and $\bm{w}$ follow Lemma \ref{lm: lm1}. The distance between $f(x_{i}^{s})$ and $c_{y_i}$ (\ie ~$C_{i,y_i}^{\bm{p},\bm{c}}$) is denoted as $d_{i}$, then we have
		\begin{equation}
			\textup{POT}^{\alpha}(\bm{p}, \bar{\bm{c}}) \leqslant \sum_{i=1}^{m}\dfrac{w_{y_i}}{r_{y_i}}p_i d_i.
		\end{equation}
	\end{lemma}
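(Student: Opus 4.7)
The plan is to prove Lemma \ref{lm: lm2} by exhibiting an explicit feasible transport plan for $\textup{POT}^{\alpha}(\bm{p}, \bar{\bm{c}})$ whose cost is exactly the right-hand side, so that the inequality follows immediately from the minimization in the definition of partial OT.

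The construction I have in mind is to send each source sample $x_i^s$ only to its own class prototype $c_{y_i}$, with a mass proportional to $p_i$ but rescaled by the factor $w_{y_i}/r_{y_i}$ (which is at most $1$). Concretely, define $\pi \in \mathbb{R}^{m\times L}_+$ by
\begin{equation*}
\pi_{ik} \;=\; \frac{w_{y_i}}{r_{y_i}}\, p_i \,\bm{1}(k = y_i).
\end{equation*}
I would then verify the three constraints defining $\bar{\Pi}^{\alpha}(\bm{p},\bar{\bm{c}})$. The row-sum bound $\pi\mathbb{1}_L \leqslant \bm{p}$ follows because $w_{y_i}\leqslant r_{y_i}$ (recall that the PPOT constraint $\pi^{\bm{c},\bm{q}}\mathbb{1}_n \leqslant \bm{c}$ forces $\bm{w}\leqslant \bm{c}$ componentwise, i.e.\ $w_k\leqslant r_k$ for every class $k$). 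The column-sum equality $\mathbb{1}_m^\top \pi = \bm{w}^\top$ is a direct calculation: for each class $k$,
\begin{equation*}
\sum_{i:\,y_i=k}\pi_{ik} \;=\; \frac{w_k}{r_k}\sum_{i:\,y_i=k}p_i \;=\; \frac{w_k}{r_k}\, r_k \;=\; w_k,
\end{equation*}
using the definition $r_k=\sum_{j:y_j=k}p_j$. The total-mass constraint $\mathbb{1}_m^\top \pi\mathbb{1}_L=\alpha$ then follows by summing the column sums, since $\sum_k w_k = \|\bm{w}\|_1 = \alpha$ (because $\pi^{\bm{c},\bm{q}}$ transports total mass $\alpha$).

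Having shown that $\pi$ is admissible, I would conclude by computing its transportation cost. Since $\pi_{ik}$ is supported on the diagonal $(i,y_i)$ and $C^{\bm{p},\bm{c}}_{i,y_i}=d_i$, we get
\begin{equation*}
\textup{POT}^{\alpha}(\bm{p},\bar{\bm{c}}) \;\leqslant\; \langle \pi, C^{\bm{p},\bm{c}}\rangle_F \;=\; \sum_{i=1}^m \frac{w_{y_i}}{r_{y_i}}\,p_i\, d_i,
\end{equation*}
which is the claimed bound. The only subtle point, and the step that deserves the most care, is justifying the feasibility of $\pi$, particularly the bound $w_{y_i}\leqslant r_{y_i}$; this is where the definition of $\pi^{\bm{c},\bm{q}}$ as a partial-OT plan (not an exact OT plan) is used. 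Combining this lemma with Lemma \ref{lm: lm1} and Proposition \ref{prop: transport} then yields Theorem \ref{thm: thmA2}.
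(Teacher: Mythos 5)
Your proposal is correct and follows essentially the same route as the paper's proof: the paper constructs exactly the same diagonal plan $\pi_{ij}=\frac{w_j}{r_j}p_i\,\bm{1}(y_i=j)$, checks the same three constraints of $\bar{\Pi}^{\alpha}(\bm{p},\bar{\bm{c}})$, and bounds the partial OT cost by $\langle\pi,C^{\bm{p},\bm{c}}\rangle_F$. The only difference is cosmetic: you spell out the justification $w_k\leqslant r_k$ (from $\bm{w}=\pi^{\bm{c},\bm{q}}\mathbb{1}_n\leqslant\bm{c}$ in Lemma \ref{lm: lm1}), which the paper uses implicitly when asserting $\frac{w_{y_i}}{r_{y_i}}p_i\leqslant p_i$.
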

	
	\begin{proof}
		Let $\pi$ be a $m \times L$ dimensional matrix, which satisfies
		\begin{equation}
			\pi_{ij} = \begin{cases}
				\dfrac{w_j}{r_j}p_i, & \text{if} ~~ y_i = j, \\
				& \\
				\quad 0, & \text{otherwise}.
			\end{cases}
		\end{equation}
		Computing the row and column sums of $\pi$, we can find that
		\begin{align}
			\sum_{j=1}^L \pi_{ij} &= \dfrac{w_{y_i}}{r_{y_i}}p_i \leqslant p_i, \label{eq: row_sum}\\
			\sum_{i=1}^m \pi_{ij} &= \dfrac{w_j}{r_j}\sum_{i: y_i =j} p_i = w_j, \label{eq: col_sum}\\
			\sum_{i=1}^m\sum_{j=1}^L \pi_{ij} &= \sum_{j=1}^L w_j = \alpha. \label{eq: sum}
		\end{align}
		Eqns.(\ref{eq: row_sum}), (\ref{eq: col_sum}) and (\ref{eq: sum}) are equal to equations as follows:
		\begin{equation}
			\pi \mathbb{1}_L \leqslant \bm{p}, 
			~~\pi^{\top} \mathbb{1}_m = \bm{w}, 
			~~\mathbb{1}_m^{\top}\pi \mathbb{1}_L = \alpha,
		\end{equation}
		which means $\pi \in \bar{\Pi}^{\alpha}(\bm{p}, \bar{\bm{c}})$. Therefore, 
		\begin{equation}
			\textup{POT}^{\alpha}(\bm{p}, \bar{\bm{c}}) \leqslant \langle \pi , C^{\bm{p},\bm{c}} \rangle_F = \sum_{i=1}^{m}\dfrac{w_{y_i}}{r_{y_i}}p_i d_i.
		\end{equation}
	\end{proof}
	
	At last, we provide the proof of theorem \ref{thm: thmA2}.
	\begin{proof}
		Combining Lemmas \ref{lm: lm1}, \ref{lm: lm2} and Proposition 1
		\begin{equation*}
			\begin{aligned}
				&\textup{POT}^{\alpha}(\bm{p}, \bm{q}) \leqslant \textup{POT}^{\alpha}(\bm{p}, \bar{\bm{c}}) + \textup{PPOT}^{\alpha}(\bm{p}, \bm{q}),\\
				&\textup{POT}^{\alpha}(\bm{p}, \bar{\bm{c}}) \leqslant \sum_{i=1}^{m}\dfrac{w_{y_i}}{r_{y_i}}p_i d_i, \\
				&\textup{PPOT}^{\alpha}(\bm{p}, \bm{q}) \leqslant \textup{m-PPOT}^{\alpha}_{\mathcal{B}}(\bm{p}, \bm{q}),		
			\end{aligned}
		\end{equation*}
		we have 
		\begin{equation*}
			\textup{POT}^{\alpha}(\bm{p}, \bm{q}) \leqslant \sum_{i=1}^{m} \dfrac{w_{y_i}}{r_{y_i}} p_i d_{i} + \textup{m-PPOT}^{\alpha}_{\mathcal{B}}(\bm{p}, \bm{q}).
		\end{equation*}
	\end{proof}

	\begin{figure}[t]
		\centering
		\includegraphics[width=0.3\textwidth]{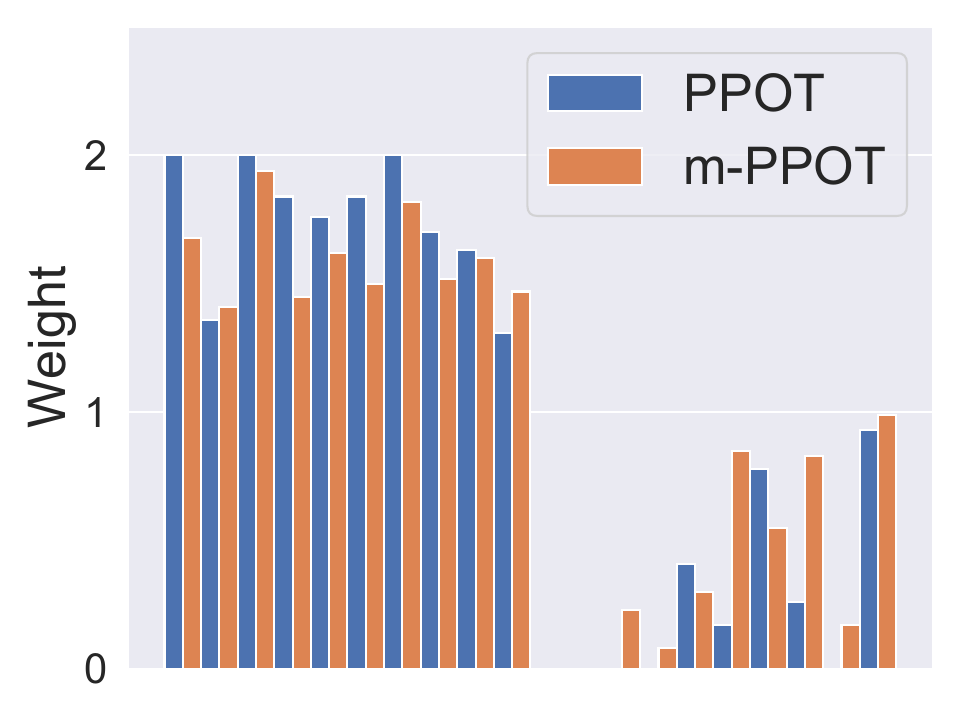} 
		\caption{Source class weight computed by the row sum of solutions of $\text{PPOT}^{\alpha}(\bm{p}, \bm{q})$ and $\text{m-PPOT}^{\alpha}(\bm{p}, \bm{q})$ for OPDA task D$\rightarrow$W.}
		\label{fig:s_weight}
	\end{figure}
	
	\begin{figure}[t]
		\centering 
		\includegraphics[width=0.3\textwidth]{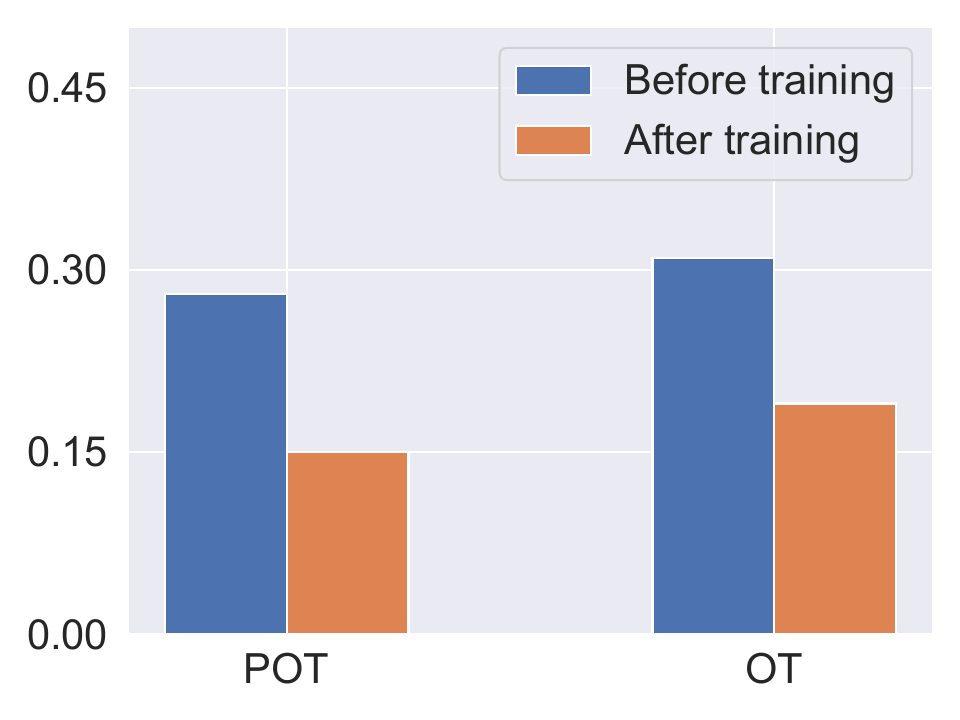} 
		\caption{The values of $\text{POT}^{\alpha}(\bm{p}, \bm{q})$ and $\text{OT}^{\alpha}(\bm{p}_c, \bm{q}_c)$ before and after training for OPDA task W$\rightarrow$D.}
		\label{fig:dist}
	\end{figure}
	
	\begin{figure}[!htbp]
		\centering 
		\includegraphics[width=0.3\textwidth]{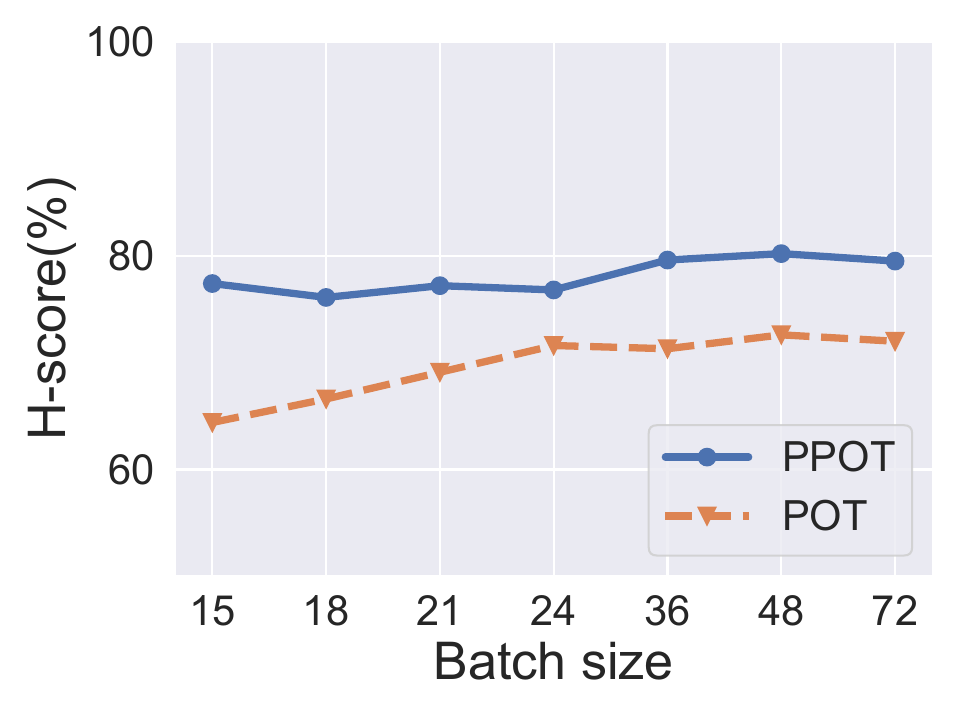} 
		\caption{H-score curves of PPOT and POT with varying batch size for OPDA task C$\rightarrow$A.}
		\label{fig:batch}
	\end{figure}

	\begin{figure}[!htbp]
		\centering 
		\includegraphics[width=0.3\textwidth]{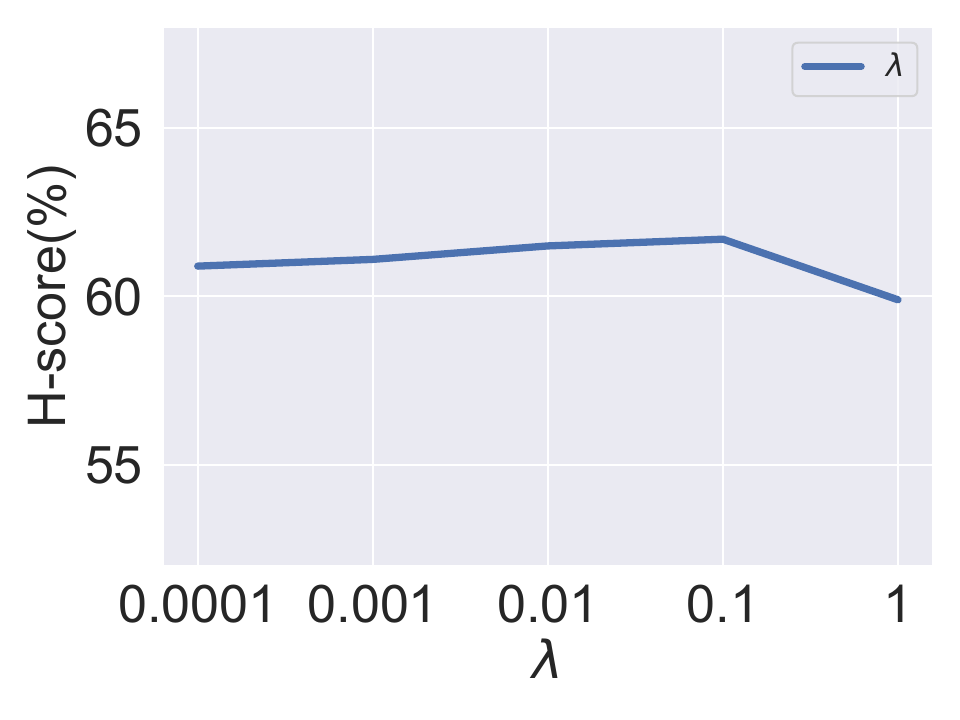} 
		\caption{Sensitivity to hyper-parameter $\lambda$ in Eqn.~(\ref{eq: c}). Results are for the OPDA setting in task P$\rightarrow$C.}
		\label{fig:lambda}
	\end{figure}

	\begin{table*}[!htbp]
		\centering
		\begin{tabular}{l|cccccc|>{\columncolor{mygray}}c}
			\toprule
			~ & \multicolumn{6}{c}{Office-31(10/10/11)} & ~\cellcolor{white}\\
			\midrule 
			Methods&A2D&A2W&D2A&D2W&W2A&W2D&Avg\\
			\midrule 
			UAN & 59.7 & 58.6 & 60.1 & 70.6 & 60.3 & 71.4 & 63.5\\
			CMU & 68.1 & 67.3 & 71.4 & 79.3 & 72.2 & 80.4 & 73.1\\
			DANCE & 79.6 & 75.8 & 82.9 & 90.9 & 77.6 & 87.1 & 82.3\\
			DCC & 88.5 & 78.5 & 70.2 & 79.3 & 75.9 & 88.6 & 80.2\\
			OVANet & 83.8 & 78.4 & 80.7 & \bf 95.9 & 82.7 & 95.5 & 86.2\\
			GATE & \bf 87.7 & 81.6 & 84.2 & 94.8 & 83.4 & 94.1 & 87.6\\
			\midrule
			\bf PPOT& 86.2 & \bf 87.0 & \bf 90.2 & 93.1 & \bf 90.2 & \bf 95.8 & \bf 90.4\\
			\bottomrule
		\end{tabular}
		\caption{H-score(\%) comparisons on Office-31 in OPDA setting.}	
		\label{tab:OPDA_OF}	
	\end{table*}
	\begin{table}[!htbp]
		\centering
		\begin{tabular}{l|c|c|c}
			\hline
			Methods & Office &  VisDA & OfficeHome\\
			\hline
			PPOT  & 82.3 & 69.3 & 49.6\\
			POT & 81.0 & 67.1 & 46.6\\	
			\hline
		\end{tabular}
		\caption{\textbf{Ablation study.} H-score(\%) comparison for UniDA on Office-31, OfficeHome and VisDA.} 
		\label{tab:ablation study}		
	\end{table}

	\begin{figure}[!htbp]
		\centering 
		\includegraphics[width=0.3\textwidth]{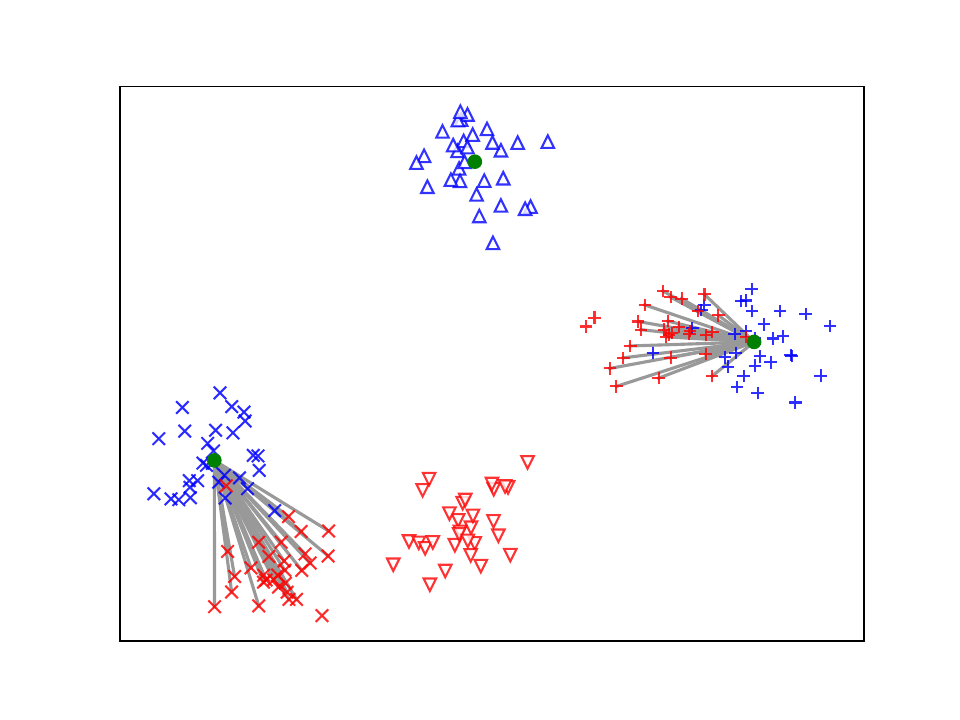} 
		\caption{Matching produced by PPOT in toy data. Different shapes mean different class, and the gray line is the matching computed by PPOT.}
		\label{fig:toydata}
	\end{figure}
	
	\section{Additional Empirical Analysis}
	\subsection{Comparison of the weight of the source prototypes derived from PPOT and m-PPOT}
	In the main paper,  we use $\bm{w}^s$, \ie, the row sum  of the optimal transport plan of $\text{m-PPOT}^{\alpha}(\bm{p}, \bm{q})$, to approximate $\bm{w}$ (\ie, the row sum of the solution of $\text{PPOT}^{\alpha}(\bm{p}, \bm{q})$) in $\mathcal{L}_{rce}$ for approximately minimizing the first term in the bound of Theorem 1. Figure~\ref{fig:s_weight} shows that the weight computed by the row sum of the optimal transport plan of $\text{m-PPOT}^{\alpha}(\bm{p}, \bm{q})$ can approximate that of $\text{PPOT}^{\alpha}(\bm{p}, \bm{q})$.
	
	\subsection{Can our model align the distributions of source and target common class data?}
	We present the value of $\textup{OT}(\bm{p}_c, \bm{q}_c)$ before and after the training of our model, as shown in Fig.~\ref{fig:dist}. Note that $\textup{OT}(\bm{p}_c, \bm{q}_c)$ decreases apparently after training, which empirically shows that our method is effective to align the distributions of source and target common class data.
	
	\subsection{Effectiveness of our model to minimize POT}
	We also provide the value of $\text{POT}^{\alpha}(\bm{p}, \bm{q})$ before and after training in Fig.~\ref{fig:dist} to testify whether $\text{POT}^{\alpha}(\bm{p}, \bm{q})$ can be minimized by our model. We find that the value of $\text{POT}^{\alpha}(\bm{p}, \bm{q})$ is apparently reduced after training, which empirically verifies the rationality of Theorem 1, because the $\mathcal{L}_{rce}$ and $\mathcal{L}_{ot}$ in our model are designed for minimizing the two terms in the upper bound of $\text{POT}^{\alpha}(\bm{p}, \bm{q})$ in Theorem 1.

	\subsection{More results for POT and PPOT}
	We have discussed the advantages of PPOT over POT in the ablation study (``Comparison of m-PPOT with m-POT'') of experiment section. Figure~\ref{fig:batch} presents the results of PPOT and POT with varying batch sizes for OPDA task C$\rightarrow$A. The figure shows that PPOT generally achieves higher performance in H-score than POT, and is more stable to the bath size than POT, especially when the batch size is less than 36. Moreover, for providing a more fair comparison with POT and PPOT, we train the network only with PPOT loss (10) (or POT loss) and a cross-entropy loss defined in source domain. Table \ref{tab:ablation study} shows their results for OPDA task in Office-31 and Office-Home datasets, we can see that PPOT performs well than POT in both datasets.

	\begin{figure}[!htbp]
		\centering 
		\includegraphics[width=0.3\textwidth]{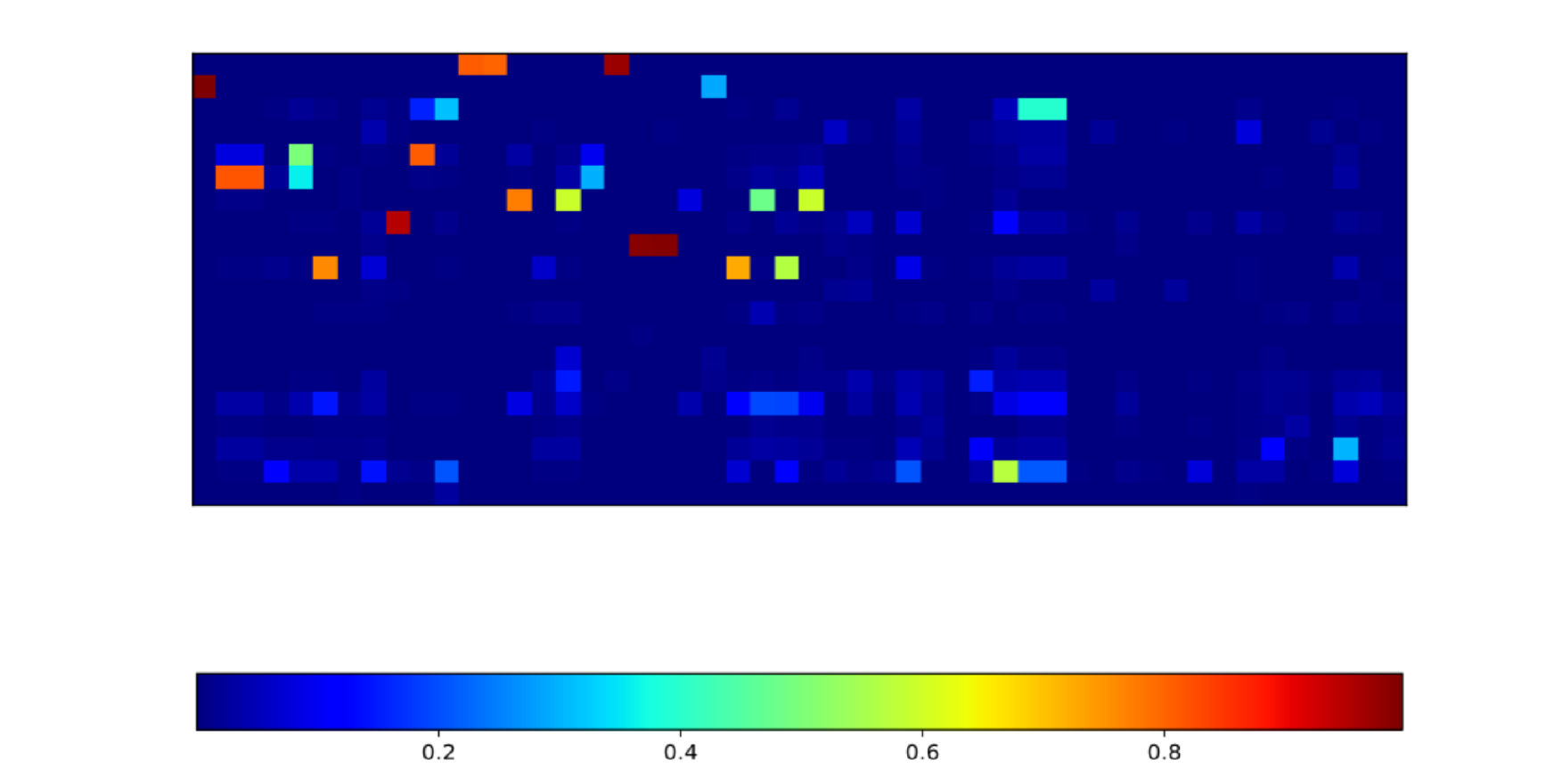} 
		\caption{Transport plan of PPOT in task W$\rightarrow$D on Office-31 datasets. Transport appears in the upper left of transport matrix is the correct transport.}
		\label{fig:realdata}
	\end{figure}
	
	\begin{figure}[!htbp]
		\centering 
		\includegraphics[width=0.3\textwidth]{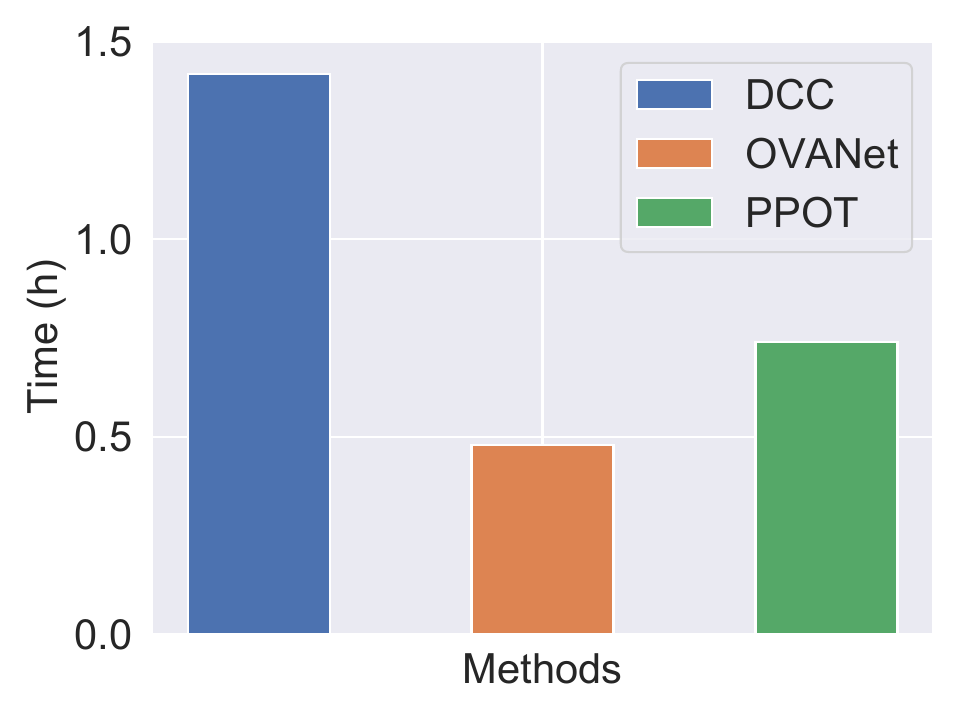} 
		\caption{Computational cost of UniDA methods in OPDA task A$\rightarrow$C.}
		\label{fig:cost}
	\end{figure}
	
	\subsection{More detail about prototype update strategy}
	We have illustrated that the set of prototypes $c$ is updated by exponential moving average in the paper, and here we show detail about this strategy. We update $c$ by the moving average:
	\begin{equation} \label{eq: c}
		c \leftarrow \lambda \hat{c} + (1 - \lambda) c
	\end{equation} 
	in each iteration, where $\hat{c}$ is computed by a batch of source features (note that if there is no sample with label $k$ in a batch, we denote $\hat{c}_k = c_k $ directly). We also report the sensitivity of hyper-parameter $\lambda$, the result is shown in Fig. \ref{fig:lambda}. Note that our model is relatively stable to varying values of $\lambda$, and the result decreases when we do not use moving average strategy (\ie, $\lambda = 1$).
	
	\subsection{Visualization of PPOT}
	We visualize the transport procedure of PPOT in toy data and the transport plan of PPOT in real data, as shown in Figs.~\ref{fig:toydata},~\ref{fig:realdata}. In the toy data experiment, each of the source data(blue) and target data (red) are sampled by Gaussian mixture distributions composed of three distinct Gaussian components indicated by different shapes where the same shapes indicate the same class (both source and target data have two common classes and one private class) and green points are the prototypes of source data. We can see that most of transport occur between target common class points and their corresponding source prototype. In real data experiment, we first compute the features of source prototypes and target samples and choose 50 target features randomly as target samples. The transport plan between source prototypes and target features is shown in Fig.~\ref{fig:realdata}, the task is W$\rightarrow$D on Office-31 datasets. Note that we rearrange target samples to ensure the indexes of common class samples are less than private class samples, which means correct transport will present in the upper left of transport plan.
	
	\subsection{Computational cost}
	We compare the computational cost of different methods with the total training time in the same training steps (5000 steps), as in Fig.~\ref{fig:cost}. Figure~\ref{fig:cost} shows that PPOT is comparable to other methods in terms of computational cost.

	\begin{table*}[!htbp]
		\centering
		\begin{tabular}{l|cccccccccccc|>{\columncolor{mygray}}c}
			\toprule
			~ & \multicolumn{12}{c}{Office-Home(10/5/50)} & ~\cellcolor{white}\\
			\midrule
			Methods&A2C&A2P&A2R&C2A&C2P&C2R&P2A&P2C&P2R&R2A&R2C&R2P&Avg\\
			\midrule
			UAN & 51.6 & 51.7 & 54.3 & 61.7 & 57.6 & 61.9 & 50.4 & 47.6 & 61.5 & 62.9 & 52.6 & 65.2 & 56.65\\
			CMU & 56.0 & 56.9 & 59.2 & 67.0 & 64.3 & 67.8 & 54.7 & 51.1 & 66.4 & 68.2 & 57.9 & 69.7 & 61.6\\
			DANCE & 61.0 & 60.4 & 64.9 & 65.7 & 58.8 & 61.8 & 73.1 & 61.2 & 66.6 & 67.7 & 62.4 & 63.7 & 63.9\\
			DCC & 58.0 & 54.1 & 58.0 & 74.6 & 70.6 & 77.5 & 64.3 & 73.6 & 74.9 & 81.0 & 75.1 & 80.4 &70.2\\
			OVANet & 63.4 & 77.8 & 79.7 & 69.5 & 70.6 & 76.4 & 73.5 & 61.4 & 80.6 & 76.5 & 64.3 & 78.9 & 72.7\\
			GATE & 63.8 & 75.9 & 81.4 & 74.0 & 72.1 & 79.8 & 74.7 & \bf 70.3 & 82.7 & 79.1 & \bf 71.5 & \bf 81.7 & 75.6\\
			\midrule
			\bf PPOT& \bf 66.0 & \bf 79.3 & \bf 84.8 & \bf 78.8 & \bf 78.0 & \bf 80.4 & \bf 82.0 & 62.0 & \bf 86.0 & \bf 82.3 & 65.0 & 80.8 & \bf 77.1\\
			\bottomrule
		\end{tabular}
		\caption{H-score(\%) comparison on Office-Home in OPDA setting.}
		\label{tab:OPDA_OH}
	\end{table*}
	
	\section{Detailed Results on Office-31 and Office-Home}
	We have reported the average accuracies over all tasks on Office-31 and Office-Home in the paper (see Tables 1, 2 and 3 of the paper). We report the detailed results for each task in this section.
	
	\subsection{Results for OPDA setting on Office-Home}
	We show the results of each task in Tables \ref{tab:OPDA_OF} and \ref{tab:OPDA_OH} on Office31 and Office-Home in OPDA setting. Our method achieves the best results in 13 out of 18 adaptation tasks.
	
	\subsection{Results for PDA and OSDA setting on Office-Home}
	Table \ref{tab:PDA_OH} shows that our method achieves the best results on average and surpasses state-of-the-art results on half of them on Office-Home in PDA setting. The results in Table \ref{tab:OSDA_OH} show that PPOT reaches the best results in 9 out of 12 tasks on Office-Home in OSDA setting.
	
	\begin{table*}[!htbp]
		\centering
		\begin{tabular}{l|c|cccccccccccc|>{\columncolor{mygray}}c}
			\toprule
			\multirow{2}*{Method} & \multirow{2}*{Type} & \multicolumn{12}{c}{Office-Home(25/40/0)} & ~\cellcolor{white}\\
			\cmidrule{3-15}
			~ & ~& A2C & A2P & A2R & C2A & C2P & C2R & P2A & P2C & P2R & R2A & R2C & R2P & Avg\\
			\midrule
			PADA & P & 52.0 & 67.0 & 78.7 & 52.2 & 53.8 & 59.1 & 52.6 & 43.2 & 78.8 & 73.7 & 56.6 & 77.1 & 62.1\\
			IWAN & P & 53.9 & 54.5 & 78.1 & 61.3 & 48.0 & 63.3 & 54.2 & 52.0 & 81.3 & 76.5 & 56.8 & 82.9 & 63.6\\
			ETN & P & 59.2 & 77.0 & 79.5 & 62.9 & 65.7 & 75.0 & 68.3 & 55.4 & 84.4 & 75.7 & 57.7 & 84.5 & 70.5\\
			AR & P & \bf65.7 & \bf87.4 & \bf89.6 & \bf79.3 & \bf75.0 & \bf87.0 & \bf80.8 & \bf65.8 & \bf90.6 & \bf80.8 & \bf65.2 & \bf86.1 & \bf79.4\\
			\midrule
			DCC & U & 54.2 & 47.5 & 57.5 & 83.8 & 71.6 & \bf86.2 & 63.7 & \bf65.0 & 75.2 & \bf85.5 & \bf78.2 & 82.6 & 70.9\\
			GATE& U & \bf 55.8 & 75.9 & 85.3 & 73.6 & 70.2 & 83.0 & 72.1 & 59.5 & \bf84.7 & 79.6 & 63.9 & 83.8 & 73.9\\
			\bf PPOT & U & 53.1 & \bf81.2 & \bf86.1 & \bf78.9 & \bf71.9 & 83.7 & \bf74.6 & 55.5 & 84.4 & 77.4 & 57.9 & \bf86.3 &  \bf74.3\\	
			\bottomrule
		\end{tabular}
		\caption{H-score(\%) comparison on Office-Home in PDA setting. ``P'' and ``U'' denote PDA and UniDA methods.}
		\label{tab:PDA_OH}
	\end{table*}

	\begin{table*}[!htbp]
		\centering
		\begin{tabular}{l|c|cccccccccccc|>{\columncolor{mygray}}c}
			\hline
			\multirow{2}*{Method} & \multirow{2}*{Type} & \multicolumn{12}{c}{Office-Home(25/0/40)} & ~\cellcolor{white}\\
			\cmidrule{3-15}
			& & A2C & A2P & A2R & C2A & C2P & C2R & P2A & P2C & P2R & R2A & R2C & R2P & Avg\\
			\midrule
			STA & O & 55.8 & 54.0 & 68.3 & 57.4 & 60.4 & 66.8 & 61.9 & 53.2 & 69.5 & 67.1 & 54.5 & 64.5 & 61.1\\
			OSBP & O &  55.1 & 65.2 & 72.9 & \bf64.3 & 64.7 & \bf70.6 & \bf63.2 & 53.2 & 73.9 & 66.7 & 54.5 & 72.3 & 64.7\\
			ROS & O & \bf 60.1 & \bf69.3 &\bf 76.5 & 58.9 & \bf65.2 & 68.6 & 60.6 & \bf56.3 & \bf74.4 & \bf68.8 & \bf60.4 & \bf75.7 & \bf 66.2\\
			\midrule
			DCC & U & 56.1 & 67.5 & 66.7 & 49.6 & 66.5 & 64.0 & 55.8 & 53.0 & 70.5 & 61.6 & 57.2 & 71.9 & 61.7\\
			OVANet & U & 58.9 & 66.0 & 70.4 & 62.2 & 65.7 & 67.8 & 60.0 & 52.6 & 69.7 & 68.2 & 59.1 & 67.6 & 64.0\\
			GATE & U & \bf 63.8 & 70.5 & 75.8 & 66.4 & 67.9 & 71.7 & 67.3 & \bf 61.5 & 76.0 & 70.4 & \bf 61.8 &  75.1 & 69.1\\
			\bf PPOT& U & 60.7 & \bf 75.2 & \bf 79.5 & \bf 67.3 & \bf 70.1 & \bf 73.8 & \bf 70.6 & 57.2 & \bf 76.1 & \bf 71.8 & 61.4 & \bf 75.8 & \bf 70.0\\
			\bottomrule
		\end{tabular}
		\caption{H-score(\%) comparison on Office-Home in OSDA setting. ``O'' and ``U'' denote OSDA and UniDA methods.}
		\label{tab:OSDA_OH}
	\end{table*}

\end{document}